\documentclass{article} % For LaTeX2e
\usepackage{iclr2027_conference,times}

\usepackage{amsmath,amsfonts,bm}

\def\eqref#1{equation~\ref{#1}}
\def\1{\bm{1}}

\def\vb{{\bm{b}}}

\def\vm{{\bm{m}}}

\def\vu{{\bm{u}}}
\def\vv{{\bm{v}}}
\def\vw{{\bm{w}}}
\def\vx{{\bm{x}}}
\def\vy{{\bm{y}}}
\def\vz{{\bm{z}}}

\def\mA{{\bm{A}}}
\def\mB{{\bm{B}}}
\def\mC{{\bm{C}}}
\def\mD{{\bm{D}}}
\def\mE{{\bm{E}}}

\def\mG{{\bm{G}}}
\def\mH{{\bm{H}}}
\def\mI{{\bm{I}}}

\def\mO{{\bm{O}}}
\def\mP{{\bm{P}}}
\def\mQ{{\bm{Q}}}

\def\mS{{\bm{S}}}

\def\mU{{\bm{U}}}
\def\mV{{\bm{V}}}
\def\mW{{\bm{W}}}
\def\mX{{\bm{X}}}

\def\mZ{{\bm{Z}}}

\DeclareMathAlphabet{\mathsfit}{\encodingdefault}{\sfdefault}{m}{sl}
\SetMathAlphabet{\mathsfit}{bold}{\encodingdefault}{\sfdefault}{bx}{n}

\usepackage{hyperref}
\usepackage{url}

\usepackage{amsmath}
\usepackage{amssymb}
\usepackage{mathtools}
\usepackage{algorithm}
\usepackage{algpseudocodex} % for more advanced typesetting
\usepackage[capitalize,noabbrev]{cleveref}
\usepackage{soul}
\usepackage{xspace}
\newcommand{\method}{LSP\xspace}
\newcommand{\methodlong}{Learnable Subspace Projections\xspace}
\newcommand{\myparagraph}[1]{\textbf{#1}}
\usepackage{float}
\usepackage{booktabs}
\usepackage{multirow}
\usepackage{bbm}

\usepackage{pifont}

\usepackage{amsthm}
\newtheorem{proposition}{Proposition}
\usepackage{enumitem}
\usepackage{caption}
\usepackage{wrapfig}

\title{Learning Functional Subspaces for\\Neural Network Compression}

\author{
Massimo Bini$^{1,2,3}$ \,\,Anders Christensen$^4$ \,\,Stephan Alaniz$^5$ \,\,Judah Goldfeder$^6$\\
\,\,\textbf{Ole Winther$^{7,8}$ \,\,Yann LeCun$^9$ \,\,Ravid Shwartz-Ziv$^9$ \,\,Zeynep Akata$^{1,2,3}$}
\And\vspace{-0.5cm}\\
$^1$Helmholtz Munich, $^2$Technical University of Munich, $^3$MCML, $^4$Orbital Industries\\
$^5$LTCI, Télécom Paris, Institut Polytechnique de Paris, $^6$Columbia University, \\
$^7$University of Copenhagen, $^8$Technical University of Denmark,
$^9$New York University
}

\iclrfinalcopy % Uncomment for camera-ready version, but NOT for submission.
\begin{document}

\maketitle
\begin{abstract}
Modern transformers pair impressive capabilities with substantial memory and compute demands.
Low-rank weight factorization reduces both while keeping the matrices dense, and thus efficient on standard hardware.
Existing methods, however, choose the subspace to remove from each weight matrix with \emph{local} closed-form criteria: activation energy, layer-wise reconstruction error, or a quadratic approximation of the loss.
These criteria ignore how errors propagate through the network, so at high compression the errors compound with depth and performance collapses.
We introduce \emph{Learnable Subspace Projections} (LSP), which instead learns the subspaces to discard end-to-end.
Each linear layer, or \emph{tied group} of layers that read the same activations, is assigned an orthogonal projector.
All projectors are optimized jointly against a \emph{global} objective---the KL divergence to the dense model's output distribution or the model's original training loss---while the pretrained weights remain frozen.
Projectors are initialized from a whitened SVD truncation, and ranks are allocated by the output KL each projector induces per parameter saved.
After training, the projectors merge into standard low-rank factors, with each tied group sharing one factor.
In attention, this also lets the model cache one narrow latent in place of full keys and values.
Across LLMs (OPT-125M/1.3B, Qwen3-4B, Llama-2-7B) and ViT-B/16, LSP outperforms baselines, and its advantage widens as compression increases.
At $-70\%$ compression, LSP brings Llama-2-7B to $10.9$ WikiText-2 perplexity and $42.2\%$ mean zero-shot accuracy, versus $13.3$ and $36.0\%$ for the strongest baseline.
The factorized model decodes up to $1.6\times$ faster than the dense model at small batch sizes, and aching the shared latent shrinks the combined memory of weights and KV cache by $13.5\times$ at a 128k-token context, versus at most $6.5\times$ for untied baseline factorizations.
\end{abstract}

\section{Introduction}

Transformers \citep{vaswani2017attention} have become the dominant architecture in deep learning, owing to their flexibility, scalability, and strong performance across tasks and modalities.
Their uniform design enables rapid development and stable scaling; however, it also leaves trained models highly redundant. 
Large models exhibit low intrinsic dimensionality \citep{aghajanyan-etal-2021-intrinsic,kuzborskij2025lowrankbiasweightdecay}, with their capabilities concentrated in a low-dimensional subspace of their parameters.
Compression exploits this redundancy to reduce memory and compute while preserving the model's behavior.
\smallskip\\
Low-rank compression is a natural way to realize these savings, replacing each weight matrix with the product of two smaller factors.
Yet this redundancy is not directly visible in the weights' spectra.
The singular values of pretrained Transformer weights decay slowly, so naive truncated SVD severely degrades performance \citep{hsu2022language}.
Practical low-rank compression instead builds on an empirical observation: the \emph{activations} flowing between layers lie in low-dimensional subspaces \citep{lowrankfeats,yuan2024asvdactivationawaresingularvalue,garg2024revealing,skean2025layer}.
Existing methods read the useful subspace off activation statistics, either from the energy of the layer input, as in ASVD \citep{yuan2024asvdactivationawaresingularvalue}, SliceGPT \citep{ashkboos2024slicegpt}, and MoDeGPT \citep{lin2025modegptmodulardecompositionlarge}, or from the reconstruction error of the layer output, as in SVD-LLM \citep{wang2025svdllm} and Swift-SVD \citep{qi2026swiftsvd}.
Either criterion captures the layer, not the network: a low-energy direction is discarded whether or not the network's output depends on it (\Cref{fig:teaser}), and a layer-wise optimum says nothing about how its residual error propagates downstream.
Loss-aware methods bring the network in through a local surrogate---Fisher-weighted reconstruction in FW-SVD \citep{hsu2022language} or a quadratic curvature model in LLM-Surgeon \citep{van2023llm}---but such surrogates hold only for small perturbations of the dense weights, whereas aggressive compression moves far from them.
Dobi-SVD \citep{qinsi2025dobisvd} optimizes the loss directly, but learns only the per-matrix truncation rank.
The loss thus shapes local approximations, sets ranks, or repairs the weights after truncation; in none of these methods does the network loss itself choose which directions are removed.
What compression has to identify, then, is the network's \emph{functional subspace}: the directions whose retention preserves its behavior on the data of interest.
Our starting observation is that choosing a subspace amounts to choosing an orthogonal projector: applying a rank-$r$ projector to a weight matrix leaves it with rank at most $r$, so it factors into two thin matrices.
The search for the functional subspace can thus be cast as a search over projectors.
This raises our central question: \emph{can we obtain better compressed models by learning which subspace to remove, optimizing the projectors end-to-end against the frozen network's output?}
\begin{figure*}[t]
\centering
\vspace{-0.2cm}
\includegraphics[width=\linewidth]{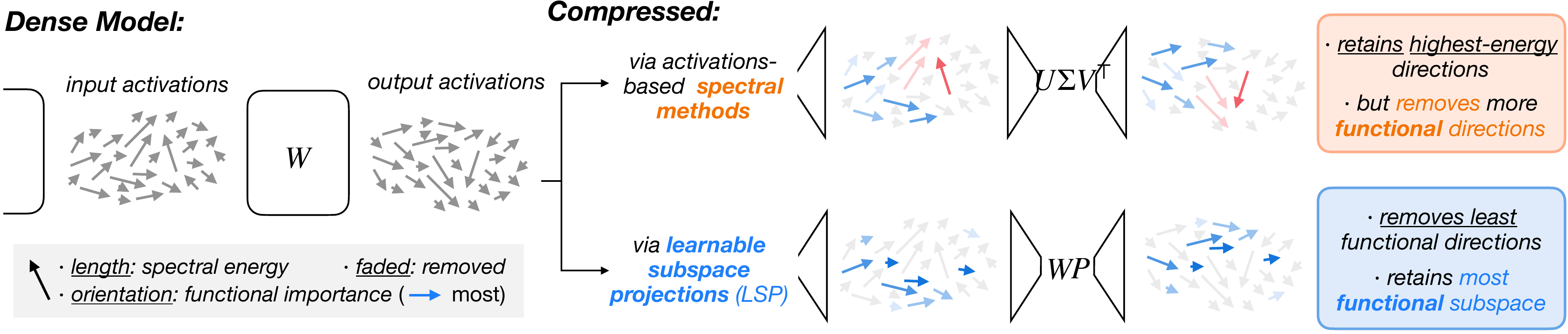}
\vspace{-0.3cm}
\caption{
\textbf{Spectral energy is not function.}
Each arrow is a direction in the input activation space of a linear layer $\mW$: its length is the spectral energy it carries, its orientation how much the network output depends on it (horizontal right: most), and faded arrows are removed.
Activation-based low-rank compression (top) keeps the highest-energy directions, even when the output barely depends on them (red), and can discard functional ones (blue).
\method (bottom) learns the projector $\mP$ against the network output, removing the directions the output is least sensitive to, whatever their energy.
}
\vspace{-0.4cm}
\label{fig:teaser}
\end{figure*}
\smallskip\\
We propose \methodlong (\method), which does exactly this: every projector is learned against a global objective, by default the KL divergence to the dense model's output distribution, or alternatively the model's original training objective, a variant we denote \method$^T$.
Our contributions are:\\
(1) \textbf{Learned subspace projections.} Layers that read the same activations (Q/K/V, gate/up) form a \emph{tied group} and share one projector; we optimize all projectors jointly across the network with the pretrained weights frozen. An activation-space parameterization with fused QR orthonormalization keeps this tractable at billion-parameter scale.
\\%
(2) \textbf{Whitened initialization and measured-KL allocation.} Each projector is initialized from a whitened truncation, recast as an orthogonal projector and extended to tied groups.
To decide how many directions each layer or tied group removes, we apply each projector in isolation, measure the KL divergence it induces on the model output, and allocate the budget by KL cost per saved parameter.
The model at initialization, which we call NoLSP, serves as a control that isolates the effect of learning.
\\%
(3) \textbf{Efficient low-rank inference.} After training, the projectors merge into plain low-rank factors, so the deployed model contains no \method-specific operations.
At the same compression ratio it requires the same FLOPs per token as other factorizations, yet the input factor shared by each tied group speeds up small-batch decoding and, in a latent-cache deployment, stores one narrow latent per group where untied factorizations store two (one each for keys and values), shrinking the KV cache.
\\%
Because \method applies to any linear layer, the same recipe transfers across Transformer models without architecture-specific surgery.
\method outperforms pruning and SVD baselines by a margin that grows with the compression ratio, and on ViT it degrades least among the compared methods when calibration and evaluation distributions differ.
\vspace{-0.1cm}

\section{\methodlong (\method)}

\method compresses a pretrained network into its functional subspace: each targeted linear layer, or tied group of layers that read the same activations, is composed with a learned orthogonal projector, and all projectors are optimized end-to-end against the model output with the pretrained weights frozen.
\Cref{fig:pipeline} summarizes the pipeline, \Cref{sec:lsp} gives the formulation, \Cref{subsec:alloc_init} the whitened initialization and measured-KL allocation, and \Cref{subsec:training_loop} the training procedure and the merge into low-rank factors.

\subsection{Learning Subspace Projections}
\label{sec:lsp}

\myparagraph{Projector parameterization.}
Consider a linear layer $\vy = \mW\vx + \vb$ with $\mW \in \mathbb{R}^{d_\text{out} \times d_\text{in}}$; the bias is left unchanged throughout.
To remove $k$ input directions, we learn $\mU \in \mathbb{R}^{d_\text{in} \times k}$ with orthonormal columns and apply the orthogonal projector $\mP = \mI - \mU\mU^\top$ to the weight:\vspace{-0.07cm}
\begin{equation}
    \mW \mP \;=\; \mW - (\mW\mU)\mU^\top,
    \label{eq:lsp_proj}\vspace{-0.07cm}
\end{equation}
which, as a result, has rank at most $d_\text{in} - k$.
Orthonormality holds by construction: we optimize an unconstrained $\mV \in \mathbb{R}^{d_\text{in} \times k}$ and set $\mU = \operatorname{qf}(\mV)$, the orthogonal factor of its thin QR decomposition, on every forward pass.
Since $\mP$ is invariant under $\mU \mapsto \mU\mO$ for any orthogonal $\mO$, the effective variable is the removed subspace $\operatorname{span}(\mU)$, which equals $\operatorname{span}(\mV)$ whenever $\mV$ has full column rank.
The output-side form is $\mP\mW$, with $\mU \in \mathbb{R}^{d_\text{out} \times k}$ and rank at most $d_\text{out} - k$.

\begin{figure*}[t]
\centering
\vspace{-0.2cm}
\includegraphics[width=\linewidth]{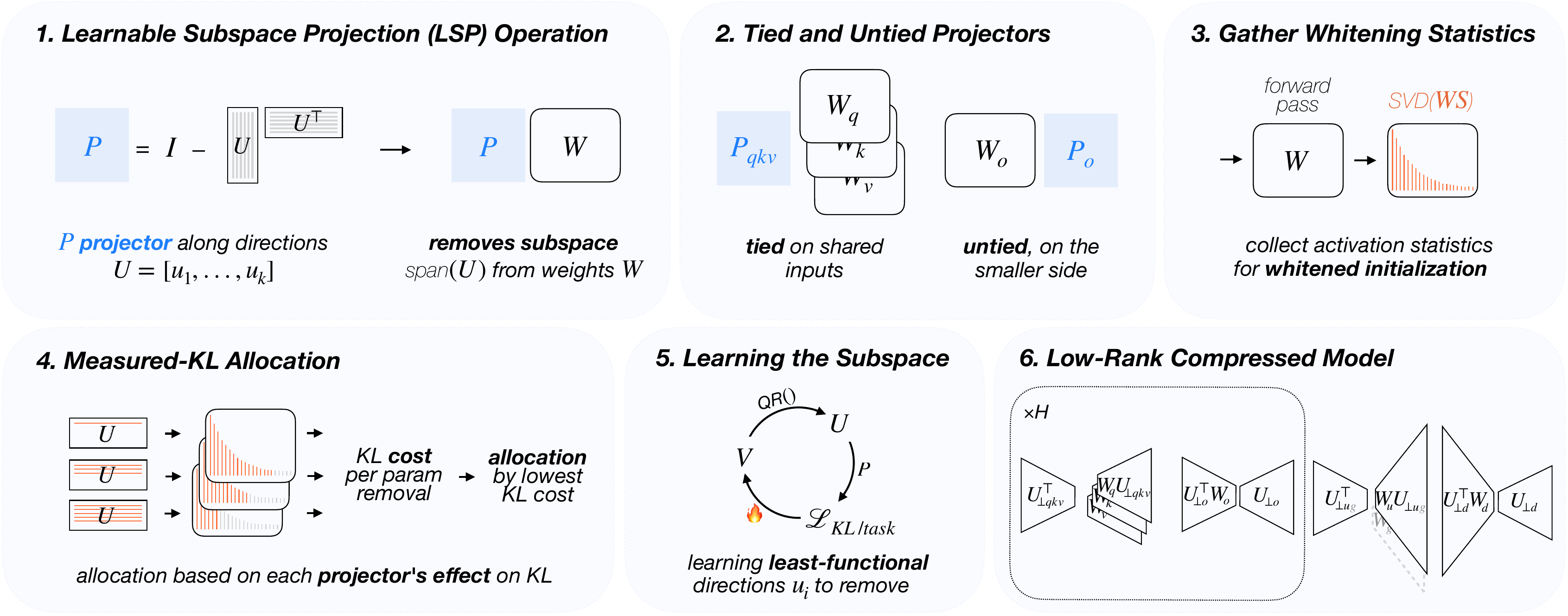}
\vspace{-0.5cm}
\caption{\textbf{Overview of \method.}
(1)~An orthogonal projector $\mP = \mI - \mU\mU^\top$ removes the learned subspace $\operatorname{span}(\mU)$ from the input or output of $\mW$.
(2)~Layers that read the same activations (Q/K/V, gate/up) share a tied projector; the remaining (untied) layers are projected on their smaller side.
\textbf{(3)} Calibration activations give the whitened SVD of $\mW\!\mS$, whose trailing directions initialize $\mU$.
\textbf{(4)} The KL divergence that each candidate truncation induces on the model output, per saved parameter, sets how many directions each unit removes.
\textbf{(5)} The unconstrained $\mV$ is trained, with $\mW$ frozen, on the KL to the dense model or on the task loss; $\mU \!=\!\operatorname{qf}(\mV)$ is its orthonormal QR factor, so the removed directions are those the output depends on least.
(6)~The trained projectors decompose into low-rank factors $\mU_\perp\mU_\perp^\top$, and are merged with the frozen weights.}
\vspace{-0.25cm}
\label{fig:pipeline}
\end{figure*}

\myparagraph{Tied groups and projection side.}
Q/K/V read the same normalized activation, as do gate/up; each such \emph{tied group} shares a single \emph{tied projector} on this common input.
Every other layer is projected on its smaller side, where, for a full-rank weight, each removed direction lowers the rank by one and $\mU$ is smallest.
Sharing lets the group retain a higher rank at the same parameter budget and, tying K/V on the input side, cache a single latent instead of full keys and values (\Cref{subsec:training_loop}).

\myparagraph{Practical design choices.} Three choices make training scalable and stable; details are in \Cref{app:design_choices}.
\emph{(i) Activation-space form.} We compute $\mW(\vx - \mU(\mU^\top \vx))$ without forming $\mW\mP$, which would materialize a dense $d_\text{out} \times d_\text{in}$ matrix and its gradient per layer, and a fused QR backward stores $\mathcal{O}(d_\text{in}k + k^2)$ per layer, versus $\mathcal{O}(d_\text{in}k^2)$ for an explicit Gram--Schmidt loop.
\emph{(ii) Warm-up and direction dropout.} Training uses $\mP_{\alpha,\vm} = \mI - \alpha\,\mU\operatorname{diag}(\vm)\mU^\top$: $\alpha$ ramps from $0$ to $1$ over the first epoch to avoid abrupt output changes, and a random mask $\vm \in \{0,1\}^k$ temporarily keeps some removed directions, acting as dropout.
\emph{(iii) Orthogonality penalty.} A penalty $\mathcal{L}_\text{ort}$ on correlations between the columns of $\mV$ (\Cref{eq:orth_reg}) stabilizes QR and its gradient.

\subsection{Whitened Initialization and Measured-KL Allocation}
\label{subsec:alloc_init}
Before training, each compressed unit---a tied group or an individual layer---needs a subspace to start from and a number of directions to remove.
The subspace comes from an ordered whitened basis, so that removing $k$ directions amounts to dropping its trailing $k$ vectors; the number comes from the KL divergence each candidate truncation induces on the model output, allocated under the global parameter budget.
The basis is thus spectral, the allocation loss-aware.

\myparagraph{Whitened initialization.}
We initialize from a whitened truncation, which scores directions by their effect on the layer output, as introduced by \citet{wang2025svdllm}, and extend it in two ways: from a per-matrix truncation to a tied group, and from an operator that is orthogonal only on the output side to a projector on either side.
Dropping the layer index, let $\mX \in \mathbb{R}^{n \times d_\text{in}}$ hold $n$ calibration inputs, with positive-definite Gram matrix $\mG = \tfrac{1}{n}\mX^\top\mX = \mS\mS^\top$ ($\mS$ its Cholesky factor), so that a candidate weight $\mZ$ has reconstruction error $\mathcal{E}(\mZ) = \tfrac{1}{n}\|(\mW - \mZ)\mX^\top\|_F^2 = \|(\mW - \mZ)\mS\|_F^2$.
The whitened truncation minimizes it at rank $r$ by keeping the leading $r$ singular directions of $\mW\mS = \mU^{w} \boldsymbol{\Sigma}^{w} (\mV^{w})^\top$ and discarding the trailing $k = d - r$, $d$ being the dimension of the projected side; subscripts $r$ and $>\!r$ denote the corresponding blocks of columns:\vspace{-0.05cm}
\begin{equation}\vspace{-0.05cm}
    \widehat{\mW} \;=\; \mU^w_r(\mU^w_r)^\top\mW \;=\; \mW\,\mS\mV^w_r(\mV^w_r)^\top\mS^{-1},
    \qquad
    \mathcal{E}(\widehat{\mW}) = \textstyle\sum_{i > r} (\sigma^{w}_i)^2 .
    \label{eq:whitening}
\end{equation}
The two forms are the same matrix.
The first is an orthogonal projector applied on the output side, and we use it as is.
The second, applied on the input side, keeps $\mathcal{K} = \operatorname{span}(\mS\mV^w_r)$ and zeroes $\operatorname{span}(\mS\mV^w_{>r})$, two subspaces that are not orthogonal unless $\mG \propto \mI$, so it is oblique; there we replace it by the orthogonal projector onto $\mathcal{K}$, which removes the orthogonal complement $\operatorname{span}(\mS^{-\top}\mV^w_{>r})$.

\begin{proposition}[Orthogonal recast]
\label{prop:whitened}
Let $\mC_r = \mS\mV^w_r$, $\mC_{>r} = \mS\mV^w_{>r}$,\vspace{-0.05cm} and $\mP_\text{init} = \mC_r\mC_r^+$ be the orthogonal projector onto $\mathcal{K}$.
\emph{(i)} On the output side, $(\mI - \mU^w_{>r}(\mU^w_{>r})^\top)\mW = \widehat{\mW}$ attains the optimum $\sum_{i>r}(\sigma^w_i)^2$.
\emph{(ii)} On the input side, $\mathcal{E}(\mW\mP_\text{init}) = \sum_{i>r}(\sigma^w_i)^2 + \|\boldsymbol{\Sigma}^w_r\,\mC_r^+\mC_{>r}\|_F^2$, and if $\sigma^w_r > 0$ the excess term vanishes if and only if $\mC_r^\top\mC_{>r} = 0$.
\emph{(iii)} For a group tied on its input, the summed error under one shared kept subspace equals that of the row-stacked whitened weight $\bar\mW\mS$, whose leading right singular subspace minimizes it, and (ii) applies to $\bar\mW$; a group tied on its output is symmetric, with the column-stacked weight, its leading left singular subspace, and (i).
\end{proposition}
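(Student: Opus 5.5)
The whole argument works in whitened coordinates. Write $\mA = \mW\mS = \mU^w\boldsymbol{\Sigma}^w(\mV^w)^\top$ and use $\mathcal{E}(\mZ) = \|(\mW-\mZ)\mS\|_F^2$ throughout.

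\emph{Part (i).} Since $\mU^w$ is orthogonal, $\mI - \mU^w_{>r}(\mU^w_{>r})^\top = \mU^w_r(\mU^w_r)^\top$. The product is therefore exactly the first form of $\widehat{\mW}$ in \Cref{eq:whitening}. Its error is $\|(\mI-\mU^w_r(\mU^w_r)^\top)\mA\|_F^2 = \sum_{i>r}(\sigma^w_i)^2$. This value is optimal by Eckart--Young applied to $\mA$: any rank-$r$ $\mZ$ gives a rank-$r$ $\mZ\mS$, using that $\mS$ is invertible.

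\emph{Part (ii).} The columns of $\mV^w$ form an orthonormal basis, so $\mI = \mS\mV^w(\mV^w)^\top\mS^{-1}$. This splits $\mI$ as $\mC_r(\mV^w_r)^\top\mS^{-1} + \mC_{>r}(\mV^w_{>r})^\top\mS^{-1}$. Then:
\begin{itemize}[leftmargin=*]
\item $\mP_\text{init}\mC_r = \mC_r$, because $\mP_\text{init}$ is the orthogonal projector onto $\operatorname{span}(\mC_r)$.
\item $(\mI-\mP_\text{init})\mS = (\mI-\mP_\text{init})\mC_{>r}(\mV^w_{>r})^\top$.
\end{itemize}
Hence $(\mW-\mW\mP_\text{init})\mS = \mW(\mI-\mP_\text{init})\mC_{>r}(\mV^w_{>r})^\top$, and the orthonormal right factor drops out of the Frobenius norm. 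Now write $\mW\mC_{>r} = \mA\mV^w_{>r} = \mU^w_{>r}\boldsymbol{\Sigma}^w_{>r}$ and $\mW\mP_\text{init}\mC_{>r} = \mW\mC_r\mC_r^+\mC_{>r} = \mU^w_r\boldsymbol{\Sigma}^w_r\mC_r^+\mC_{>r}$. These two terms have mutually orthogonal column spaces. Pythagoras then gives $\sum_{i>r}(\sigma^w_i)^2 + \|\boldsymbol{\Sigma}^w_r\mC_r^+\mC_{>r}\|_F^2$.

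For the equivalence:
\begin{itemize}[leftmargin=*]
\item $\mC_r$ has full column rank because $\mS$ is invertible. Hence $\mC_r^+ = (\mC_r^\top\mC_r)^{-1}\mC_r^\top$.
\item If $\sigma^w_r>0$, then $\boldsymbol{\Sigma}^w_r$ is invertible.
\end{itemize}
So the excess term vanishes iff $(\mC_r^\top\mC_r)^{-1}\mC_r^\top\mC_{>r}=0$, which holds iff $\mC_r^\top\mC_{>r}=0$.

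\emph{Part (iii).} Take members $\mW_j$ sharing the input $\mX$, hence the same $\mS$, and a common projector $\mP$. Then
\[
\sum_j\|(\mW_j-\mW_j\mP)\mS\|_F^2 = \|(\bar\mW-\bar\mW\mP)\mS\|_F^2,
\]
because the squared Frobenius norm is additive over row blocks. With $\bar\mW\mS$ in place of $\mA$, the minimization over a shared rank-$r$ kept subspace is the single-matrix problem. The leading right singular subspace of $\bar\mW\mS$ is the Eckart--Young optimum in whitened coordinates, and the computation of (ii) transfers verbatim with $\bar\mW$. The output-tied case is the transpose: concatenating columns adds the errors, the leading left singular subspace of $[\mW_1\mS_1,\dots]$ is optimal, and (i) applies directly. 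Here one must check that each block is whitened by its own $\mS_j$; since the projector acts on the left, this is consistent.

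\emph{Main obstacle.} I expect the key step in (ii) to be the orthogonality of the two column spaces in $\mU^w$. Without it there would be a cross term. The decomposition through $\mC_r,\mC_{>r}$ is what exposes it, so I would carry out that identity carefully first.
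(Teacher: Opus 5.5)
Your proof is correct and follows essentially the same route as the paper. Your split $\mS = \mC_r(\mV^w_r)^\top + \mC_{>r}(\mV^w_{>r})^\top$ is the same rotation by $[\mV^w_r\;\mV^w_{>r}]$ that the paper uses to write $\mathcal{E}(\mZ)$ as an error on $\mC_r$ plus an error on $\mC_{>r}$. From there both arguments use $\mP_\text{init}\mC_r=\mC_r$, the orthogonality of $\operatorname{span}(\mU^w_r)$ and $\operatorname{span}(\mU^w_{>r})$, and Eckart--Young on the stacked whitened weights for (iii). One detail the paper states and you leave implicit is completing the singular bases to full orthogonal matrices, padding $\boldsymbol{\Sigma}^w_{>r}$ with zeros when $\mW\mS$ is not square; your steps ``$\mU^w$ is orthogonal'' and $\mI=\mS\mV^w(\mV^w)^\top\mS^{-1}$ need it.
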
\vspace{-0.25cm}
\smallskip
The proof is in \Cref{app:whitened_excess}.
The initialized layer thus agrees with the whitened truncation on the kept subspace $\mathcal{K}$, while on directions $\mC_{>r}$, which the truncation zeroes, it retains their least-squares component along $\mC_r$; the excess is that component weighted by the kept singular values, and it vanishes when the whitened truncation is itself orthogonal, as for an isotropic input Gram matrix.

\myparagraph{Measured-KL rank allocation.}
Units differ in how much the model output depends on them, so we measure each unit's sensitivity directly and remove parameters where the measured cost per saved parameter is lowest; sensitive units keep more rank or stay dense.
For a unit $u$, removing the trailing $k$ directions of its initialized basis, with the rest of the network dense, costs\vspace{-0.05cm}
\begin{equation}\vspace{-0.05cm}
    \Delta_u(k) \;=\; \mathbb{E}_{\vx\sim\mathcal{D}_\text{alloc}}\,
    D_\mathrm{KL}\big(p_{\theta_0}(\cdot\mid\vx)\,\big\|\,p_{\theta_0\setminus(u,k)}(\cdot\mid\vx)\big),
    \label{eq:kl_cost}
\end{equation}
where $\mathcal{D}_\text{alloc}$ is the subset of the calibration set $\mathcal{D}_\text{cal}$ used for allocation and $\theta_0\!\setminus\!(u,k)$ denotes the dense model with only this truncation applied. 
Measuring $\Delta_u$ needs no gradients and no new SVDs: the dense model's outputs are cached once, and every candidate reuses the unit's ordered basis.
We measure it on a grid of removal fractions and interpolate the running maximum of each curve, so that the resulting estimate $\tilde\Delta_u$ is nondecreasing in $k$.
Starting from the dense model, the allocator repeatedly takes the move $k\! \to\! k'$ with the smallest marginal cost \vspace{-0.05cm}$\big(\tilde\Delta_u(k') - \tilde\Delta_u(k)\big)/\big(s_u(k') - s_u(k)\big)$, where $s_u(k)$ is the number of parameters the unit saves after factorization, counting shared factors once, until the target savings are reached (\Cref{app:alloc_details}).
Unlike curvature-based estimates \citep{van2023llm}, which hold only near the dense weights, this measures finite truncations through the full network, with joint training handling the combined effect of independently measured units.

\subsection{Training Procedure}
\label{subsec:training_loop}\vspace{-0.1cm}
We replace each targeted linear layer with its \method counterpart and initialize it as in \Cref{subsec:alloc_init}: we construct the whitened bases, allocate ranks by measured KL, and set $\{\mV_\ell\}$ accordingly.
We then optimize all $\{\mV_\ell\}$ jointly at fixed ranks, with every pretrained parameter frozen, against\vspace{-0.05cm}
\begin{equation}\vspace{-0.05cm}
    \mathcal{L}_\text{total} = \mathcal{L}_\text{obj}\big(f\big(\cdot\,; \{\mP^{(\ell)}_{\alpha,\vm}\}\big)\big) + \lambda_\text{ort}\,\mathcal{L}_\text{ort}.
\end{equation}
By default, $\mathcal{L}_\text{obj}$ is the output-distillation loss \citep{hinton2015distilling}%\vspace{-0.05cm}
\begin{equation}%\vspace{-0.05cm}
    \mathcal{L}_\text{KL} = \mathbb{E}_{\vx \sim \mathcal{D}_\text{cal}}\, \mathrm{KL}\big(p_\text{dense}(\cdot \mid \vx)\,\big\|\,p_\text{compressed}(\cdot \mid \vx)\big),
    \label{eq:kl_loss}
\end{equation}
where $p$ is the distribution over the next token for LLMs, averaged over all positions, and over the last Transformer block's output for the ViT, averaged over tokens.
Alternatively, $\mathcal{L}_\text{obj}$ is the model's original training loss (next-token or classification cross-entropy), a variant we denote \method$^T$.
The teacher is the same network with its projectors disabled ($\alpha = 0$), run without gradient tracking, so no second copy of the weights is stored.
We use a linear-warmup cosine learning-rate schedule with early stopping on a held-out validation split, and keep the best validation epoch (\Cref{app:protocol}).

\myparagraph{Choice of objective.}
Distillation is the default: it targets the dense model's full output distribution rather than a single target, and for classifiers it needs no labels, so any unlabeled pool can serve for calibration (\Cref{subsec:exp_vit}).
The task loss (\method$^T$) instead fits the calibration data directly, so it can specialize the model when the calibration data matches the deployment task; in our experiments, \method$^T$ reaches its best validation score in fewer epochs on average (\Cref{app:cost}).

\myparagraph{Merging.}
After training, each projector is folded into its weight, a step we call \emph{merging}.
For an input-side projector, let the columns of $\mU_\perp \in \mathbb{R}^{d_\text{in} \times r}$, with $r = d_\text{in} - k$, form an orthonormal basis of $\operatorname{span}(\mU)^\perp$.
Then $\mP = \mU_\perp\mU_\perp^\top$, and the merged weight factorizes \emph{exactly}:
\begin{equation}
    \mW \mP \;=\; (\mW\mU_\perp)\,\mU_\perp^\top \;=\; \mB\mA,
    \qquad
    \mA \;=\; \mU_\perp^\top \in \mathbb{R}^{r \times d_\text{in}},
    \quad
    \mB \;=\; \mW\mU_\perp \in \mathbb{R}^{d_\text{out} \times r},
    \label{eq:merge}
\end{equation}
a rank-$r$ factorization with $(d_\text{in}\!+\!d_\text{out})\,r$ parameters; on the output side, $\mP\mW \!=\! \mU_\perp(\mU_\perp^\top\mW)$.
Every member of an input-tied group shares $\mA$, which is stored and applied once; in attention, the common latent $\vz = \mA\vx$ suffices to reconstruct both keys and values, so the KV cache can store $\vz$ alone.

\section{Experiments}
\label{sec:experiments}

%%%%%%%%%
\begin{wraptable}{r}{0.5\linewidth}
\centering
\vspace{-0.42cm}
\caption{Wikitext-2 perplexity ($\downarrow$) of LLMs at $-30/50/70\%$.
One of the two \method variants has the lowest perplexity in all 12 settings, by a margin that grows with the ratio; at $-70\%$ both variants beat every baseline on all four models.
}
\vspace{-0.3cm}
\label{tab:results}
\scriptsize
\setlength{\tabcolsep}{0.5pt}
\renewcommand{\arraystretch}{0.92}
\resizebox{\linewidth}{!}{
\begin{tabular}{@{}c@{\hspace{1pt}}c@{\hspace{4pt}}lcccc@{}}
\toprule
\textbf{Comp.} & & \textbf{Method} & \textbf{OPT-125M} & \textbf{OPT-1.3B} & \textbf{Qwen3-4B} & \textbf{Llama2-7B} \\
\midrule
\multicolumn{3}{@{}l}{Dense (Base)} & 27.9 & 14.7 & 8.0 & 5.5 \\
\midrule
\multirow{10}{*}{-30\%} & \multirow{5}{*}{\rotatebox[origin=c]{90}{\itshape \shortstack{activations/\\[-2.5pt]reconstruc.}}} & NoLSP & 48.5 & 23.0 & 16.8 & 9.4 \\
 &  & ASVD & 106.8 & 396.1 & 154.5 & 98.2 \\
 &  & SliceGPT & 51.7 & 23.9 & 15.0 & 10.3 \\
 &  & SVD-LLM (W) & 48.7 & 20.2 & 19.1 & 10.4 \\
 &  & Swift-SVD & 47.0 & 19.8 & 15.6 & 10.4 \\
\addlinespace[2pt]
 & \multirow{5}{*}{\rotatebox[origin=c]{90}{\itshape \shortstack{\\loss-aware}}} & LLM-Surgeon & 31.7 & 16.2 & 11.2 & 8.9 \\
 &  & Dobi-SVD & 63.2 & 23.8 & 18.5 & 10.0 \\
 &  & SVD-LLM & 31.1 & \underline{13.9} & 10.3 & \underline{6.5} \\
 &  & \textbf{LSP$^T$ (Ours)} & \textbf{24.5} & \textbf{12.9} & \underline{9.2} & \textbf{6.4} \\
 &  & \textbf{LSP (Ours)} & \underline{31.0} & 15.7 & \textbf{9.1} & 6.6 \\
\midrule
\multirow{10}{*}{-50\%} & \multirow{5}{*}{\rotatebox[origin=c]{90}{\itshape \shortstack{activations/\\[-2.5pt]reconstruc.}}} & NoLSP & 206.9 & 81.2 & 43.5 & 29.8 \\
 &  & ASVD & 1463.2 & 5213.1 & 3586.5 & ${>}10^{4}$ \\
 &  & SliceGPT & 97.2 & 57.8 & 32.0 & 25.2 \\
 &  & SVD-LLM (W) & 126.5 & 44.2 & 82.1 & 30.2 \\
 &  & Swift-SVD & 114.9 & 39.5 & 41.4 & 29.4 \\
\addlinespace[2pt]
 & \multirow{5}{*}{\rotatebox[origin=c]{90}{\itshape \shortstack{\\loss-aware}}} & LLM-Surgeon & 47.3 & 22.5 & 25.8 & 17.8 \\
 &  & Dobi-SVD & 183.1 & 54.0 & 39.3 & 28.1 \\
 &  & SVD-LLM & 54.5 & 18.7 & 14.1 & \underline{8.4} \\
 &  & \textbf{LSP$^T$ (Ours)} & \textbf{30.6} & \textbf{16.4} & \underline{12.5} & \underline{8.4} \\
 &  & \textbf{LSP (Ours)} & \underline{35.4} & \underline{18.5} & \textbf{11.4} & \textbf{8.0} \\
\midrule
\multirow{10}{*}{-70\%} & \multirow{5}{*}{\rotatebox[origin=c]{90}{\itshape \shortstack{activations/\\[-2.5pt]reconstruc.}}} & NoLSP & 1899 & 922.8 & 1274 & 222.8 \\
 &  & ASVD & 3849.0 & 9080.1 & 8991.4 & ${>}10^{5}$ \\
 &  & SliceGPT & 290.1 & 262.7 & 89.0 & 86.3 \\
 &  & SVD-LLM (W) & 633.4 & 590.0 & 2246.7 & 214.9 \\
 &  & Swift-SVD & 561.0 & 412.8 & 1458.5 & 203.6 \\
\addlinespace[2pt]
 & \multirow{5}{*}{\rotatebox[origin=c]{90}{\itshape \shortstack{\\loss-aware}}} & LLM-Surgeon & 113.7 & 53.9 & 371.0 & 105.0 \\
 &  & Dobi-SVD & 759.6 & 338.3 & 92.5 & 61.2 \\
 &  & SVD-LLM & 187.6 & 36.0 & 22.5 & 13.3 \\
 &  & \textbf{LSP$^T$ (Ours)} & \textbf{42.4} & \underline{23.3} & \underline{19.4} & \underline{13.0} \\
 &  & \textbf{LSP (Ours)} & \underline{48.9} & \textbf{22.4} & \textbf{16.2} & \textbf{10.9} \\
\bottomrule
\vspace{-0.7cm}
\end{tabular}}

\end{wraptable}
%%%%%%%%%
We evaluate \method up to $-70\%$ compression on decoder language models (\Cref{subsec:exp_llm}) and on vision transformers (\Cref{subsec:exp_vit}), then measure inference efficiency (\Cref{subsec:compute}) and analyze the subspaces \method learns (\Cref{subsec:analysis}).

\textbf{Setup.} For language models we compress OPT-125M/1.3B \citep{zhang2022opt}, Qwen3-4B \citep{yang2025qwen3technicalreport} and Llama-2-7B \citep{touvron2023llama2openfoundation}. We report WikiText-2 \citep{merity2016pointer} test perplexity, calibrating as SliceGPT does on 1024 sequences of 2048 tokens from the training split, and zero-shot accuracy on six benchmarks with the calibration data given in \Cref{subsec:exp_llm}.
Early stopping and every tuned setting are selected on held-out validation splits.
On Qwen3-4B, whose grouped-query K and V are a quarter of Q's width, K and V share an output-side projector and Q is projected alone; every other model ties Q/K/V on the input side (\Cref{app:protocol}).
For vision we compress a ViT-B/16 \citep{dosovitskiy2020image} fine-tuned on CIFAR-100 \citep{krizhevsky2009learning} and report source accuracy and linear-probe transfer.
A compression ratio is the fraction of the parameters of the linear layers (no embeddings, no head) that is removed.
All baselines run from their official code with the same calibration data and precision as \method; SVD baselines are evaluated in factorized form, with tunable settings swept around the official optima (\Cref{app:protocol}).

\textbf{Baselines.} \emph{Activation- and reconstruction-based baselines}, constructed from activation statistics or reconstruction criteria: SliceGPT \citep{ashkboos2024slicegpt}, ASVD \citep{yuan2024asvdactivationawaresingularvalue}, SVD-LLM~(W) \citep{wang2025svdllm}, whose whitening minimizes the layer-wise reconstruction error, and Swift-SVD \citep{qi2026swiftsvd}, which reaches the same per-layer optimum from the output covariance and adds a per-matrix rank allocation.
\emph{Loss-aware baselines} use gradients of the loss: LLM-Surgeon \citep{van2023llm}, through a Kronecker-factored curvature model, and Dobi-SVD \citep{qinsi2025dobisvd}, which learns only the per-matrix truncation rank within a closed-form basis, run without its quantization step so that parameter counts match.
SVD-LLM is reported also with its LoRA recovery fine-tuning.
On vision we compare with SliceGPT, SVD-LLM~(W), FLAR-SVD \citep{Thoma_2025_CVPR}, and PELA \citep{guo2024pela}. The last two are proposed on vision models, with PELA retraining all parameters by distilling every block's output features from the dense model.
NoLSP is the non-trained version of LSP, controlling for the learning stage at fixed initialization, ranks and ties.

\vspace{-0.05cm}
\subsection{Large Language Models}
\label{subsec:exp_llm}

\myparagraph{Perplexity.}
\Cref{tab:results} reports WikiText-2 test perplexity at $-30/50/70\%$ compression.
One of the two \method variants has the lowest perplexity in all 12 model--ratio settings, and the margin grows with the ratio.
At $-70\%$ the training-free SVD methods deteriorate sharply, and so does NoLSP, the untrained initialization at the same ranks: learning the directions is what keeps the compressed models usable.
No loss-aware baseline closes the gap at $-70\%$, including LoRA-recovered SVD-LLM, the strongest baseline on the three larger models.
The task loss (\method$^T$) leads mostly on the smaller models and at lower ratios, where it can fall below dense test perplexity (both OPT models at $-30\%$).
Since the calibration text is the WikiText-2 training split, this reflects specialization to the evaluation domain (\Cref{subsec:training_loop}); LoRA-recovered SVD-LLM shows the same effect on OPT-1.3B. Distillation (\method) leads instead on the larger models and at higher ratios.
\smallskip\\
\myparagraph{Zero-shot accuracy.}
Optimizing compression on a target objective raises a natural concern: does the compressed model retain \emph{general} capabilities, or is the retained capacity overfit to the calibration objective?
\Cref{tab:zeroshot_6bench} follows two protocols from the literature on six commonsense, science and math benchmarks, OpenBookQA \citep{OpenBookQA2018}, ARC-easy \citep{clark2018think}, WinoGrande \citep{sakaguchi2019winogrande}, HellaSwag \citep{zellers2019hellaswag}, PIQA \citep{bisk2020piqa} and MathQA \citep{amini2019mathqa}: Llama-2-7B at $-30/50/70\%$ with Alpaca calibration \citep{alpaca}, following \citet{wang2025svdllm}, and Qwen3-4B at $-20/40/60\%$ with C4 calibration \citep{c4_raffel}, following \citet{qi2026swiftsvd}.
We compare with the low-rank baselines, the closest to \method (\Cref{app:protocol}).
Both \method variants lead the strongest training-free baseline in mean accuracy at every ratio, by $6.7$--$10.5$ points on Llama-2-7B and $6.2$--$10.8$ on Qwen3-4B.
The two objectives are within two points of each other, with distillation ahead or tied in every setting.

\begin{table}[t]
\vspace{-0.2cm}
\centering
\caption{Zero-shot accuracy ($\uparrow$, \%). Left: Llama-2-7B at 30/50/70\% compression, every method calibrated on Alpaca. Right: Qwen3-4B at 20/40/60\% compression, C4 calibration. Both \method variants lead every baseline in mean accuracy at every ratio, by a margin that grows with the ratio.
}
\vspace{-0.3cm}
\label{tab:zeroshot_6bench}\label{tab:qwen3_4b}
\scriptsize
\setlength{\tabcolsep}{1pt}
\resizebox{\linewidth}{!}{
\begin{tabular}{@{}lcccccccc@{\hspace{8pt}}cccccccc@{}}
\toprule
 & \multicolumn{8}{c}{\textbf{Llama-2-7B} (Alpaca calibration)} & \multicolumn{8}{c}{\textbf{Qwen3-4B} (C4 calibration)} \\
\cmidrule(lr){2-9}\cmidrule(l){10-17}
\textbf{Method} & \textbf{Comp.} & \textbf{ARC-e} & \textbf{PIQA} & \textbf{Openb.} & \textbf{WinoG.} & \textbf{HellaS.} & \textbf{MathQA} & \textbf{Avg.} & \textbf{Comp.} & \textbf{ARC-e} & \textbf{PIQA} & \textbf{Openb.} & \textbf{WinoG.} & \textbf{HellaS.} & \textbf{MathQA} & \textbf{Avg.} \\
\midrule
Dense (Base) & -- & 75.6 & 77.8 & 32.8 & 69.9 & 57.1 & 28.1 & 56.9 & -- & 79.0 & 77.9 & 32.0 & 70.3 & 54.6 & 53.8 & 61.3 \\
\midrule
SVD-LLM (W) & \multirow{6}{*}{-30\%} & 61.4 & 67.2 & 24.4 & 61.3 & 38.7 & 23.0 & 46.0 & \multirow{6}{*}{-20\%} & 59.0 & 70.5 & 22.4 & 59.4 & 40.1 & 26.5 & 46.3 \\
Swift-SVD   & & 61.6 & 68.1 & 24.0 & 61.5 & 38.9 & 23.0 & 46.2 & & 63.6 & 72.2 & 26.8 & 64.2 & 45.7 & 28.6 & 50.2 \\
Dobi-SVD    & & 53.3 & 64.1 & 20.4 & 57.5 & 35.7 & 22.8 & 42.3 & & 57.5 & 69.0 & 22.6 & 62.9 & 41.5 & 26.7 & 46.7 \\
SVD-LLM     & & 68.5 & 72.7 & 30.2 & 64.7 & 49.0 & 24.3 & \underline{51.6} & & 63.1 & 72.7 & 27.2 & 64.9 & 47.6 & 31.2 & 51.1 \\
\textbf{LSP$^T$ (Ours)} & & 71.3 & 74.6 & 30.2 & 63.3 & 50.7 & 27.3 & \textbf{52.9} & & 77.7 & 77.1 & 29.0 & 65.1 & 51.0 & 38.6 & \underline{56.4} \\
\textbf{LSP (Ours)}     & & 71.2 & 75.4 & 30.0 & 64.0 & 50.4 & 26.4 & \textbf{52.9}    & & 78.3 & 76.7 & 30.8 & 65.6 & 51.8 & 40.0 & \textbf{57.2} \\
\midrule
SVD-LLM (W) & \multirow{6}{*}{-50\%} & 41.0 & 58.4 & 17.4 & 52.2 & 29.6 & 21.9 & 36.8 & \multirow{6}{*}{-40\%} & 36.3 & 59.6 & 13.0 & 50.8 & 29.3 & 22.3 & 35.2 \\
Swift-SVD   & & 41.8 & 58.7 & 19.0 & 52.5 & 29.8 & 22.1 & 37.3 & &50.8 & 66.1 & 18.4 & 58.8 & 34.1 & 23.2 & 41.9 \\
Dobi-SVD    & & 36.4 & 57.7 & 16.2 & 51.7 & 29.3 & 22.0 & 35.6 & & 38.0 & 60.7 & 15.0 & 55.6 & 32.2 & 21.6 & 37.2 \\
SVD-LLM     & & 57.3 & 65.7 & 24.6 & 57.4 & 39.9 & 22.5 & 44.6 & & 58.1 & 67.5 & 21.2 & 58.3 & 39.7 & 23.4 & 44.7 \\
\textbf{LSP$^T$ (Ours)} & & 64.3 & 69.6 & 25.0 & 57.5 & 43.6 & 23.6 & \underline{47.3} & & 69.7 & 72.7 & 27.2 & 60.5 & 45.7 & 27.7 & \underline{50.6} \\
\textbf{LSP (Ours)}     & & 65.2 & 70.8 & 24.6 & 58.6 & 43.4 & 23.9 & \textbf{47.8}    & & 70.6 & 72.3 & 28.8 & 60.9 & 45.4 & 26.5 & \textbf{50.7} \\
\midrule
SVD-LLM (W) & \multirow{6}{*}{-70\%} & 28.7 & 53.4 & 14.8 & 49.3 & 26.5 & 20.3 & 32.2 & \multirow{6}{*}{-60\%} & 26.9 & 54.5 & 12.4 & 50.0 & 26.5 & 21.6 & 32.0 \\
Swift-SVD   & & 28.8 & 52.7 & 13.8 & 49.4 & 26.4 & 20.0 & 31.9 & & 30.2 & 56.1 & 12.6 & 49.3 & 27.3 & 22.2 & 33.0 \\
Dobi-SVD    & & 29.3 & 54.9 & 14.4 & 52.6 & 26.5 & 21.5 & 33.2 & & 28.1 & 55.5 & 14.6 & 52.0 & 27.1 & 21.9 & 33.2 \\
SVD-LLM     & & 39.5 & 58.2 & 16.2 & 50.9 & 29.9 & 21.3 & 36.0 & & 36.9 & 59.4 & 14.8 & 51.1 & 30.7 & 20.8 & 35.6 \\
\textbf{LSP$^T$ (Ours)} & & 50.3 & 63.2 & 20.0 & 52.9 & 33.8 & 22.0 & \underline{40.4} & & 55.5 & 68.7 & 21.6 & 53.7 & 37.6 & 22.9 & \underline{43.3} \\
\textbf{LSP (Ours)}     & & 55.6 & 66.1 & 21.8 & 51.7 & 36.1 & 21.9 & \textbf{42.2}    & & 56.9 & 68.8 & 21.2 & 55.3 & 37.3 & 23.3 & \textbf{43.8} \\
\bottomrule
\end{tabular}}
\vspace{-0.35cm}
\end{table}

\subsection{Vision Transformers}
\label{subsec:exp_vit}

Vision models let us control the calibration data cleanly: we can change the composition of the calibration set at a fixed size, and evaluate the compressed network both on its source task and, through linear-probes, on unseen datasets.
We use this to ask how much each method depends on seeing the evaluation distribution at compression time.
We build two calibration pools of the same size: a \emph{single} pool of 47k CIFAR-100 training images, in-domain with the model and its evaluation task, and a \emph{diverse} pool that keeps 10k of them and adds Food-101 \citep{bossard14}, CIFAR-10 \citep{krizhevsky2009learning}, EuroSAT \citep{helber2018introducing}, STL-10 \citep{coates2011stl10} and DTD \citep{cimpoi14describing}, capped at 10k images each.
The added datasets share no label space with CIFAR-100, so \Cref{tab:vit_results} compares only label-free methods: the training-free baselines; PELA, whose feature distillation needs no labels; and \method, whose distillation target is the dense model's output features for every image, whatever its source.

%%%%%%%%%
\begin{wraptable}{r}{0.44\linewidth}
\centering
\vspace{-0.45cm}
\caption{ViT-B/16 compressed on a \emph{single} pool ($\sim$47k CIFAR-100 images) or a \emph{diverse} pool ($\sim$47k images from CIFAR-100, Food-101, CIFAR-10, EuroSAT, STL-10 and DTD). \emph{Source}: CIFAR-100 accuracy; \emph{Transfer}: mean linear-probe accuracy on Pets, Aircraft and Places365, in neither pool; \emph{Gain}: diverse minus single transfer, computed before rounding.}
\vspace{-0.3cm}
\label{tab:vit_results}
\setlength{\tabcolsep}{2pt}
\resizebox{\linewidth}{!}{
\begin{tabular}{@{}cl@{\hspace{0.5em}}cc@{\hspace{0.7em}}ccc@{}}
\toprule
 & & \multicolumn{2}{c}{\textbf{Single pool}} & \multicolumn{3}{c}{\textbf{Diverse pool}} \\
\cmidrule(lr){3-4}\cmidrule(lr){5-7}
\textbf{Comp.} & \textbf{Method} & \textit{Source} & Transfer & \textit{Source} & \textit{Transfer} & \textit{Gain} \\
\midrule
\multicolumn{2}{l}{Dense (Base)} & 89.9 & 47.4 & 89.9 & 47.4 & --- \\
\midrule
\multirow{5}{*}{-30\%}
  & SliceGPT & 86.3 & 38.5 & 85.1 & 40.0 & +1.5 \\
  & SVD-LLM (W) & 88.2 & 41.2 & 87.7 & 42.3 & +1.1 \\
  & FLAR-SVD & 88.8 & 43.6 & 88.6 & 44.7 & +1.1 \\
  & PELA & 89.2 & 42.3 & 88.9 & 43.9 & +1.6 \\
  & \textbf{LSP (Ours)} & \textbf{89.4} & \textbf{43.9} & \textbf{89.2} & \textbf{47.5} & \textbf{+3.7} \\
\midrule
\multirow{5}{*}{-50\%}
  & SliceGPT & 83.3 & 33.8 & 78.8 & 36.9 & +3.2 \\
  & SVD-LLM (W) & 86.2 & 38.1 & 84.1 & 38.6 & +0.5 \\
  & FLAR-SVD & 86.4 & 38.1 & 83.1 & 36.2 & \textminus2.0 \\
  & PELA & \textbf{88.5} & \textbf{40.1} & \textbf{87.8} & 41.8 & +1.7 \\
  & \textbf{LSP (Ours)} & 88.3 & 37.6 & 87.7 & \textbf{44.7} & \textbf{+7.1} \\
\midrule
\multirow{5}{*}{-70\%}
  & SliceGPT & 66.3 & 26.6 & 44.5 & 29.7 & +3.1 \\
  & SVD-LLM (W) & 77.1 & 31.7 & 66.2 & 33.4 & +1.7 \\
  & FLAR-SVD & 71.5 & 26.6 & 64.6 & 27.9 & +1.3 \\
  & PELA & 85.3 & \textbf{34.1} & 82.8 & 38.3 & +4.2 \\
  & \textbf{LSP (Ours)} & \textbf{85.9} & 33.0 & \textbf{84.6} & \textbf{39.6} & \textbf{+6.6} \\
\bottomrule
\vspace{-1.1cm}
\end{tabular}}

\end{wraptable}
%%%%%%%%%
\myparagraph{Accuracy under calibration shift.}
Calibrated in-domain, the training-free baselines are close to \method at 30\%, but the gap opens with the ratio (\Cref{tab:vit_results}): at $-70\%$ \method still scores $85.9$ against the dense model's $89.9$, where the strongest training-free baseline falls to $77.1$.
PELA, which retrains every weight, is the only baseline on par with \method.
Moving the calibration data away from the evaluation distribution costs every method source accuracy, and \method the least; since the two pools are matched in size, this isolates the composition of the calibration data from its amount.

\myparagraph{Downstream transfer.}
\emph{Transfer} is the mean frozen-feature linear-probe accuracy over Pets \citep{parkhi12a}, Aircraft \citep{maji13fine-grained} and Places365 \citep{zhou2017places}, none of them in any calibration pool, and \emph{Gain} compares the two pools at matched size.
Calibration diversity helps the baselines little, while \method has the largest gain at every ratio and the best transfer on the diverse pool at all three ratios, including $-50\%$ and $-70\%$, where it trails on the single pool (per-target breakdown in \Cref{app:vit_transfer}).
Label-free training alone does not explain this: PELA trains on the same pools with the same budget, reaches the best single-pool transfer at $-50$ and $-70\%$, but gains less from diversity than \method at every ratio.
A label-free KL objective therefore not only gives most of the best results on both language and vision benchmarks, but also opens to a new axis for compression toward better zero-shot performance, exploiting unlabeled data to mimic the original abilities of the dense model.

\subsection{Inference Efficiency}
\label{subsec:compute}

\begin{table}[t]
\begin{minipage}{0.51\linewidth}
\centering
\vspace{0.2cm}
\includegraphics[width=\linewidth]{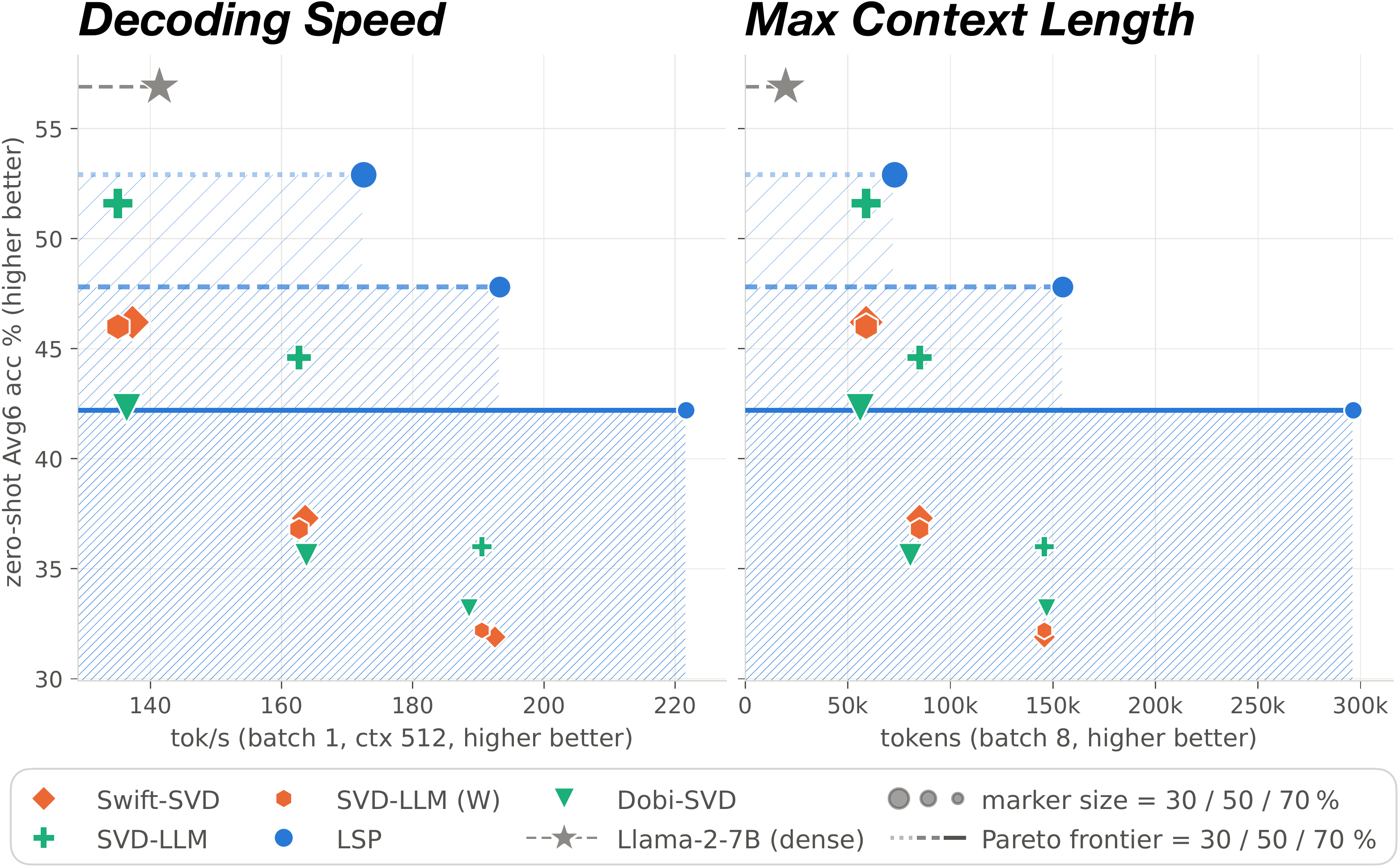}
\vspace{-0.5cm}
\caption{\textbf{Inference efficiency on Llama-2-7B.} Zero-shot accuracy (Avg over the six benchmarks of \Cref{tab:zeroshot_6bench}) at $-30/50/70\%$ compression (marker size) against CUDA-graph-compiled decode throughput at batch 1 and 512 tokens of context (left), and against the number of tokens per sequence whose weights and latent KV cache fit on one GH200 at batch size~8 (right). Lines trace the Pareto frontier at each ratio (dotted, dashed, solid), hatched in the color of the method that attains it.
}
\label{fig:inference}
\end{minipage}
\hfill
\begin{minipage}{0.47\linewidth}
\centering
\caption{\textbf{Inference efficiency of $-70\%$ compressed Llama-2-7B.} The factorized methods match on weights and FLOPs; \method's tied low-rank projector caches one narrow latent per layer instead of separate K and V, so its KV cache is $2.2$--$2.6\times$ smaller than the untied factorizations' and $14.7\times$ smaller than dense, it holds the longest context on one GPU; \method also decodes fastest at full-width KV cache.
}
\vspace{-0.2cm}
\label{tab:inference}
\footnotesize
\setlength{\tabcolsep}{2pt}
\resizebox{\linewidth}{!}{%
\begin{tabular}{@{}lccccc@{}}
\toprule
& \textit{Dense} & \textit{SVD-} & \textit{Swift-} & \textit{Dobi-} & \textbf{LSP} \\
& & \textit{LLM} & \textit{SVD} & \textit{SVD} & \textbf{(Ours)} \\
\midrule
\multicolumn{6}{@{}l}{\textit{Memory (batch 8)}} \vspace{0.05cm}\\
\textbf{Weights (GiB)} & 12.55 & 4.11 & 4.11 & 4.12 & 4.11 \\
\textbf{KV floats/tok/layer} & 8192 & 1228 & 1228 & 1434 & \textbf{556} \\
\quad \textbf{vs.\ dense} & -- & 6.7$\times$ & 6.7$\times$ & 5.7$\times$ & \textbf{14.7}$\times$ \\
\textbf{W+KV @128k (GiB)} & 524.6 & 80.9 & 80.9 & 93.7 & \textbf{38.9} \\
\quad \textbf{vs.\ dense} & -- & 6.5$\times$ & 6.5$\times$ & 5.6$\times$ & \textbf{13.5}$\times$ \\
\textbf{Max ctx (k tokens)} & 19.7 & 145.8 & 145.8 & 124.9 & \textbf{322.1} \\
\midrule
\multicolumn{6}{@{}l}{\textit{Compute and speed (batch 1)}} \vspace{0.05cm}\\
\textbf{Prefill TFLOPs @2k} & 29.26 & 10.69 & 10.69 & 10.71 & 10.69 \\
\textbf{Decode GFLOP/step @2k} & 14.29 & 5.22 & 5.22 & 5.23 & 5.22 \\
\textbf{Decode tok/s @512} & 141 & 191 & 193 & 189 & \textbf{221} \\
\quad \textbf{@4k} & 85 & 100 & 101 & 99 & \textbf{108} \\
\quad \textbf{@16k} & 36 & 38 & 38 & 38 & \textbf{40} \\
\bottomrule
\end{tabular}
}
\end{minipage}
\vspace{-0.3cm}
\end{table}

\Cref{fig:inference} compares decode throughput and the longest context that fits on one GPU for Llama-2-7B at all three ratios (\method$^T$ has the same factor shapes and is omitted).
\Cref{tab:inference} details the $-70\%$ checkpoints in two deployments of the same factorized model: with a standard full-width KV cache, in which throughput is measured (SDPA, CUDA-graph decoding), and with a latent cache, for which cache size and context capacity are computed (\Cref{app:inference_models}).

\myparagraph{Decode throughput.}
At small batch sizes, decoding is memory-bound: each step reads every weight once.
At the same compression ratio, all factorizations read about the same number of weight bytes, so what separates them is how these bytes are split into matrix multiplies, since small multiplies achieve lower memory bandwidth than large ones.
\method issues fewer of them: a tied group runs one shared input factor for Q/K/V where an untied factorization runs three, and a unit the allocator leaves dense (\Cref{subsec:analysis}) runs as one matrix multiply rather than two.
It is the fastest factorized model in all model--ratio settings we measure (\Cref{app:inference_models}); on Llama-2-7B, it decodes faster than the dense model at every ratio ($1.21\times$, $1.36\times$, and $1.56\times$ at $-30$, $-50$, and $-70\%$), whereas every untied factorization is slower than dense at $-30\%$.
\smallskip\\
\myparagraph{Latent-cache memory.}
Factorizing $\mW = \mB\mA$ (\Cref{eq:merge}) does not by itself shrink the KV cache, since $\mB$ restores the full width, but caching the latent $\vz = \mA\vx$ does, given a dedicated attention implementation (\Cref{app:merge}).
As in MLA and Palu \citep{deepseekai2024deepseekv2, chang2024palucompressingkvcachelowrank}, the value factor absorbs into the output projection, while under rotary embeddings keys are rebuilt from the latent at each step, at a cost linear in context length and latent width that any latent factorization pays; the latent cache thus serves capacity rather than speed.
A tied group caches one latent where an untied factorization caches two, and the allocator compresses Q/K/V strongly (\Cref{subsec:analysis}), so \method has the narrowest cache (\Cref{tab:inference}).
At $-70\%$, with $128$k cached tokens and batch size $8$, weights plus cache take $13.5\times$ less memory than dense, against at most $6.5\times$ for untied factorizations.
A 95.5\,GiB GPU thus holds about $320$k cached tokens per sequence, against at most $146$k: a memory capacity well beyond Llama-2-7B's 4k training context.

\subsection{What Does \method Compress?}
\label{subsec:analysis}

\myparagraph{Allocation.}
On Llama-2-7B measured-KL allocation is far from uniform (\Cref{fig:drop_diff}, left).
The Q/K/V groups are compressed far more than the MLP projections, even though each direction removed from gate/up saves more parameters ($26{,}112$) than one removed from Q/K/V ($16{,}384$): per saved parameter, attention inputs are markedly cheaper to compress.
This is what keeps the shared K/V latent narrow at high ratios (about a seventh of full rank at $-70\%$), and hence drives the KV-cache savings of \Cref{subsec:compute}.
Later blocks retain more rank than earlier ones at every ratio, and at low compression many units (untied layers or tied groups) stay dense, $53$ of $128$ at $-30\%$: a unit is factorized only when its allocated rank falls below the break-even rank at which the two factors become smaller than the dense weight.
The same contrast between attention and MLP projections holds across all LLMs (\Cref{fig:rank_profile_models}).
\smallskip\\
\myparagraph{Retained subspace.}
Let $\vw_i$ be the $i$-th singular vector of an original weight on its projected side, and $a_i = \|\mP\vw_i\|_2 \in [0,1]$.
Projection shrinks the $i$-th rank-one term of the weight from $\sigma_i$ to $\sigma_i a_i$, so $\Delta_i = \sigma_i(a_i - 1)$ is the amplitude lost along that direction (\Cref{fig:drop_diff}, middle).
Unlike weight-SVD truncation, removal has no sharp cutoff: it spans the whole spectrum, deepest on the leading directions in Q/K/V and inside the spectrum in the other projections.
Learning shifts it further (\Cref{fig:drop_diff}, right): 
relative to NoLSP, which shares the allocated ranks, \method generally removes more of its high singular directions (left part) in exchange for small singular direction ones (right part).
The leading direction changes most: direction~1 has the largest absolute difference to NoLSP. Notice that learning, the only difference between the two, lowers perplexity at $-70\%$ from $222.8$ to $10.9$ (\Cref{tab:results}).
The weight spectrum thus ranks directions poorly: whitened truncation discards trailing directions the output depends on, and learning corrects it, most strongly along the leading direction.

\begin{figure}[t]
\centering
\vspace{-0.15cm}
\includegraphics[width=0.33\linewidth]{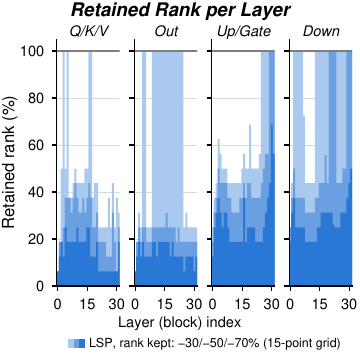}\hfill
\includegraphics[width=0.33\linewidth]{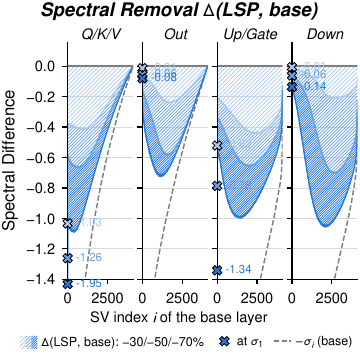}\hfill
\includegraphics[width=0.33\linewidth]{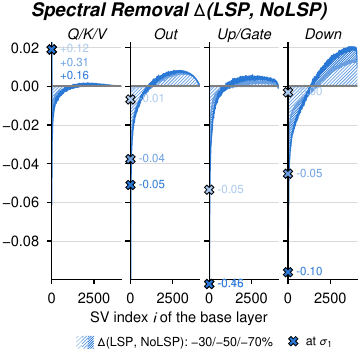}
\vspace{-0.2cm}
\caption{
\textbf{Rank allocation and spectral removal on Llama-2-7B.}
\emph{Left:} fraction of full rank retained by each unit after compression, by block and projection type; measured-KL allocation is far from uniform.
\emph{Middle:} amplitude $\Delta_i$ lost along each singular direction of the original weights; average over layers;
\emph{Right:} difference to NoLSP, $\sigma_i(a_i^{\text{LSP}} \!-\! a_i^{\text{NoLSP}})$; the leading direction changes most, and outside Q/K/V \method removes more of it and retains more of the trailing directions, most visibly in the down projection.
All panels show runs at $-30/50/70\%$ compression (light to dark). Crosses give the delta for $\sigma_1$. Plots for more models and layers in \Cref{app:subspace}.
}
\vspace{-0.3cm}
\label{fig:drop_diff}
\end{figure}

\section{Related Work}
\label{sec:related}

\myparagraph{Pruning.} Pruning removes weights scored by magnitude \citep{hanMagnitude, hanDeepCompression}, by their movement during fine-tuning \citep{movementPruning}, or by a local quadratic model of the loss \citep{OBdamage, OBsurgeon}.
At LLM scale, the scores come from layer-wise reconstruction \citep{sparsegpt}, activation-scaled magnitudes \citep{sun2024wanda}, gradient-based group importance \citep{ma2023llmpruner} or layer redundancy \citep{men2025shortgpt}.
Recent methods learn what to remove against a global objective with the weights frozen, as semi-structured masks \citep{fang2024maskllm, liu2025proxsparse} or per-block widths \citep{gao2024displlm}, part of a broader shift from local to global criteria in structured pruning \citep{wang2026localglobal}.
\method brings this idea to low-rank compression: the removed set is a continuous subspace rather than a set of coordinates, so no discrete relaxation is needed, and the savings are realized as dense low-rank factors.
\smallskip\\
\myparagraph{Low-rank compression.} Trained weights are not low-rank themselves, so most low-rank methods rely on the activations lying in lower-dimensional subspaces \citep{yuan2024asvdactivationawaresingularvalue, garg2024revealing}.
They choose the removed subspace per layer in closed form, either from activation statistics, as in ASVD \citep{yuan2024asvdactivationawaresingularvalue}, SliceGPT \citep{ashkboos2024slicegpt}, MoDeGPT \citep{lin2025modegptmodulardecompositionlarge}, FLAR-SVD \citep{Thoma_2025_CVPR} and further variants \citep{huang2025sola, li2025moesvd, chiang2026uniql}, or from the layer's output reconstruction error, as in SVD-LLM \citep{wang2025svdllm} and Swift-SVD \citep{qi2026swiftsvd}.
Acting on one layer in isolation, these criteria do not see the network output, and their errors accumulate through depth \citep{odema2026understanding}.
Where the loss is used, it enters through a local surrogate, as in FW-SVD \citep{hsu2022language} and LLM-Surgeon \citep{van2023llm}, or by fine-tuning the weights after truncation, as in SVD-LLM's LoRA stage and PELA \citep{guo2024pela}; \method instead optimizes the removed subspace itself against the network output, with the weights frozen.
\smallskip\\
\myparagraph{Rank allocation and shared bases.} Beyond choosing directions, compression also decides how the remaining capacity is laid out across the network.
Ranks are allocated non-uniformly across matrices from measured or estimated sensitivity \citep{yuan2024asvdactivationawaresingularvalue, wang2025svdllmv2, qi2026swiftsvd} or learned through a relaxation of the truncation position \citep{qinsi2025dobisvd}, and factors are shared among matrices, across layers \citep{wang2025basis} or among the Q/K/V projections that read the same activation \citep{wang2025qsvd}.
The latter two adapt the structure of the compressed model to the network, but the basis at each rank remains closed-form.
\method uses the same two levers: projectors are tied within each group of layers that share an input, and ranks are allocated per compression unit by applying whitened projections and measuring the change in output KL (\Cref{subsec:alloc_init}).

\section{Conclusion}
\label{sec:conclusion}

We introduced \methodlong (\method), which compresses a pretrained network by learning, for each linear layer, the subspace the network can do without.
Its premise is that statistical redundancy, whether measured by activation energy or by layer-wise reconstruction error, is not functional redundancy.
\method therefore selects the removed subspaces against the network output: it allocates ranks by measured output KL and optimizes all projectors jointly, with the pretrained weights frozen.
Across four decoder LLMs, one of its two objectives attains the lowest perplexity in all twelve model--ratio settings, and both achieve the best zero-shot accuracy among low-rank methods.
The margin grows with the compression ratio: at $-70\%$ on Llama-2-7B, \method reaches $10.9$ perplexity, against $13.3$ for the strongest baseline and $222.8$ without learning.
On a vision transformer, it is the most robust to calibration shift and transfers best when calibrated on diverse unlabeled data.
The merged model is also the fastest factorization we measure, decoding up to $1.56\times$ faster than the dense model.
Its shared latent reduces the memory of weights and KV cache by $13.5\times$ at a 128k-token context, twice the savings of untied factorizations.    

\section*{Acknowledgments}
Our work was partially funded by the ERC (853489 - DEXIM) and the Alfried Krupp von Bohlen und Halbach Foundation, which we thank for their support.
The authors gratefully acknowledge the Gauss Centre for Supercomputing e.V. (www.gausscentre.eu) for funding this project by providing computing time on the Supercomputer JUPITER at Jülich Supercomputing Centre (JSC). This work was supported by the EuroHPC JU and the Gauss Centre for Supercomputing (GCS) through funding by the European Commission, German Federal Ministry of Research, Technology and Space (BMFTR), the Ministry of Culture and Science of the State of North Rhine-Westphalia (MKW). This work also benefited from Hi! PARIS and state funding managed by the French National Research Agency (ANR) under the France 2030 program, reference ANR-23-IACL-0005.

\section*{AI Disclosure Statement}

In this work, we used generative AI tools for: design or provide feedback on research  methodology or experiments (feedback to double-check for correctness); implement methods (implementation portions; tested and manually checked); assist in formulating mathematical claims, providing critical ingredients, and helping writing proofs (with each step checked by two authors), refine hypotheses, support qualitative and thematic data analysis (data visualization and organization, tested and manually checked), interpret results (summarization).
We have not used generative AI tools for: help develop theoretical models or conceptual frameworks, assist with translation, clean and reformat dataset, generating synthetic data.
We take responsibility for the final content of this work, including text, claims or artifacts produced with the aid of generative AI.

\section*{Ethics Statement}
By substantially lowering the parameter and compute footprint of large models, LSP contributes to the democratization of pretrained transformers, making strong language and vision models deployable on edge devices and in domain-specialized applications where retraining from scratch is infeasible. As with any compression method, the compressed model is a lossy approximation of the original; while our experiments show small aggregate degradation, this loss may be unevenly distributed across subpopulations, domains, or rare-event behaviors that the calibration distribution under-represents. We recommend that downstream practitioners evaluate compressed models on the same fairness, robustness, and safety metrics they would apply to the dense baseline before deploying them.

\bibliography{bibliography}
\bibliographystyle{iclr2027_conference}

\newpage

\appendix
\section{Method Details}
\label{app:technical_details}

\subsection{Optimality of the Orthogonal Projector}
\label{app:proj_optimality}
\Cref{sec:lsp} applies the projector $\mP = \mI - \mU\mU^\top$. The following proposition is why that particular operator: once the removed subspace is fixed, the orthogonal projector is the unique least-disruptive map that annihilates it.

\textbf{Proposition.}
Let $S \subset \mathbb{R}^d$ have orthonormal basis
$\mU \in \mathbb{R}^{d \times k}$, and let $\mP = \mI - \mU\mU^\top$ denote
the orthogonal projector onto $S^\perp$. Among all
$\mQ \in \mathbb{R}^{d \times d}$ with $\mQ\vu = 0$ for every $\vu \in S$,
$\mP$ is the unique minimizer of $\|\mI - \mQ\|_F$, attaining the optimal
value $\sqrt{k}$.

\textbf{Proof.}
Write $\mQ = \mI - \mE$ so the annihilation constraint $\mQ\mU = 0$ becomes
$\mE\mU = \mU$. Let $\mU_\perp \in \mathbb{R}^{d \times (d-k)}$ complete
$\mU$ to an orthonormal basis of $\mathbb{R}^d$. In this basis, every
admissible $\mE$ has the block form
\begin{equation}
    \mE \;=\;
    \begin{bmatrix} \mU & \mU_\perp \end{bmatrix}
    \begin{bmatrix} \mI_k & \mC \\ 0 & \mD \end{bmatrix}
    \begin{bmatrix} \mU & \mU_\perp \end{bmatrix}^\top,
    \qquad
    \mC \in \mathbb{R}^{k \times (d-k)},\;
    \mD \in \mathbb{R}^{(d-k) \times (d-k)},
    \label{eq:E_block_form}
\end{equation}
where the constraint $\mE\mU = \mU$ has fixed the $(1,1)$ block to $\mI_k$
and the $(2,1)$ block to $0$, while $\mC$ and $\mD$ are free. Orthogonal
invariance of the Frobenius norm and the block decomposition give
\begin{equation}
    \|\mE\|_F^2
    \;=\; \|\mI_k\|_F^2 + \|\mC\|_F^2 + \|\mD\|_F^2
    \;=\; k + \|\mC\|_F^2 + \|\mD\|_F^2 \;\ge\; k,
\end{equation}
with equality iff $\mC = 0$ and $\mD = 0$. In that case
$\mE = \mU\mU^\top$ and $\mQ = \mI - \mU\mU^\top = \mP$, establishing
optimality and uniqueness.
\hfill $\square$

The proposition settles the operator, not the subspace: which directions are removed is chosen by the whitened criterion of \Cref{subsec:alloc_init} at initialization and by the loss thereafter, whereas among all maps that annihilate a given $S$ only $\mP$ leaves $S^\perp$ untouched. Because $\mP$ is a projector ($\mP^2 = \mP$), the modulated operator of \Cref{sec:lsp} at $\vm = \mathbf{1}$ is the convex combination $(1-\alpha)\mI + \alpha\mP$, a true interpolation between identity and projection rather than an arbitrary contraction.

\subsection{Practical Design Choices}
\label{app:design_choices}
This section details the tied groups and the three design choices of \Cref{sec:lsp}.
Throughout, $\ell$ indexes projectors (one per untied layer or tied group), $d_\ell$ is the dimension of the projected side, and $k_\ell$ is the number of removed directions.

\myparagraph{Activation-space form.}
Applying the projector through the activation keeps no dense $d_\text{out}\times d_\text{in}$ projected weight in the autograd graph: the only additional state is $\mU_\ell$ and the narrow activation $\mU_\ell^\top\vx \in \mathbb{R}^{k_\ell}$ per token, on top of the activations the frozen network already stores.
The fused QR backward retains $\mathcal{O}(d_\ell k_\ell + k_\ell^2)$ state, compared with $\mathcal{O}(d_\ell k_\ell^2)$ for an explicit Gram--Schmidt loop; both return an orthonormal basis of $\operatorname{span}(\mV_\ell)$ when $\mV_\ell$ has full column rank, and hence the same projector in exact arithmetic.
Since $k_\ell \le d_\ell$, the projector and QR state totals $\mathcal{O}(\sum_\ell d_\ell k_\ell)$ over the network, plus $\mathcal{O}(\sum_\ell k_\ell)$ per token for the narrow activations.

\myparagraph{Tied groups.}
For an input-tied group $\mathcal{G}$, we apply \Cref{eq:lsp_proj} to the row-stacked weight $[\mW^{1}; \dots; \mW^{|\mathcal{G}|}] \in \mathbb{R}^{(\sum_{m} d^m_\text{out}) \times d_\text{in}}$.
At a common rank $r$, the merged group costs $r\,(d_\text{in} + \sum_{m \in \mathcal{G}} d^m_\text{out})$ parameters, against $r\sum_{m \in \mathcal{G}} (d_\text{in} + d^m_\text{out})$ for $|\mathcal{G}|$ separate factorizations (\Cref{eq:merge}).
Sharing thus saves $(|\mathcal{G}|-1)\,d_\text{in} r$ parameters, which the group can spend on a higher rank, at the cost of constraining all members to the same input subspace.
The same construction applies to an output-tied group, whose members write the same output: the weights are column-stacked, the output factor is shared, each member keeps its own input factor, and the merged group costs $r\,(d_\text{out}+\sum_{m\in\mathcal{G}}d^m_\text{in})$ parameters (\Cref{eq:merge_output}).

\myparagraph{Warm-up, direction dropout, and orthogonality penalty.}
A projector that removes $k$ directions is never the identity: $\|\mI - \mP\|_F = \sqrt{k}$ for every $\mU$, an analogue of the fixed distance of reflections from the identity noted by \citet{bini2024ether}.
The ramp of $\alpha$ in $\mP_{\alpha,\vm}$ (\Cref{sec:lsp}) therefore starts training from the identity.
The mask entries $m_i \sim \mathrm{Bernoulli}(1-p)$ are redrawn at every step, independently for each member of a tied group over the shared $\mU$.
We apply no $1/(1-p)$ rescaling: it would give each removed direction the coefficient $1 - \alpha/(1-p)$, which becomes negative as $\alpha \to 1$ and would reflect the direction rather than remove it.
Without rescaling, at $\alpha = 1$ the modulated $\mP$ is an orthogonal projector onto the complement of the directions removed at that step.
Both $\alpha < 1$ and the mask relax the target rank during training; it is reached only at $\alpha = 1$ and $\vm = \mathbf{1}$.

The orthogonality penalty acts on the unconstrained columns of $\mV_\ell$:
\begin{equation}
    \mathcal{L}_\text{ort} = \sum_{\ell:\,k_\ell > 1} \frac{1}{k_\ell(k_\ell-1)} \sum_{i \neq j} \big| \langle \vv^{(\ell)}_i, \vv^{(\ell)}_j \rangle \big|.
    \label{eq:orth_reg}
\end{equation}
It discourages correlated columns, which in practice stabilizes QR and its gradient; orthonormality of $\mU_\ell$ itself is already enforced by QR.
We use the raw inner product rather than a cosine: the columns are unit-norm at initialization, and since the optimizer applies no weight decay, the raw form also keeps their norms from growing.
Unlike $\mP$ itself, direction dropout and the penalty depend on the chosen basis of $\operatorname{span}(\mV_\ell)$, so during training the optimization is not fully invariant to rotations within the subspace; the exported projector is.

\subsection{Whitened Initialization and Its Excess Error}
\label{app:whitened_excess}
We prove \Cref{prop:whitened}.
Drop the superscript $w$ from the SVD of $\mW\mS$ and complete the singular bases to $[\mU_r\;\mU_{>r}]$ and $[\mV_r\;\mV_{>r}]$, padding $\boldsymbol{\Sigma}_{>r}$ with zero singular values where needed, so that $\|\boldsymbol{\Sigma}_{>r}\|_F^2 = \sum_{i>r}\sigma_i^2$.
As in the proposition, $\mC_r=\mS\mV_r$, $\mC_{>r}=\mS\mV_{>r}$, $\mathcal{K}=\operatorname{span}(\mC_r)$ and $\mP_\text{init}=\mC_r\mC_r^+$.
Since $[\mV_r\;\mV_{>r}]$ is orthogonal, rotating by it preserves the Frobenius norm, so for any $\mZ$
\begin{equation}
    \mathcal{E}(\mZ) = \|(\mW-\mZ)\mC_r\|_F^2 + \|(\mW-\mZ)\mC_{>r}\|_F^2,
    \qquad
    \mW\mC_r=\mU_r\boldsymbol{\Sigma}_r,\quad
    \mW\mC_{>r}=\mU_{>r}\boldsymbol{\Sigma}_{>r}.
    \label{eq:whit_split}
\end{equation}

\myparagraph{(i) Output side.}
$(\mI-\mU_{>r}\mU_{>r}^\top)\mW = \mU_r\mU_r^\top\mW$, the first form of \Cref{eq:whitening}, so its error is the optimum $\|\boldsymbol{\Sigma}_{>r}\|_F^2$.

\myparagraph{(ii) Input side.}
Since $\mC_r^\top\mS^{-\top}\mV_{>r}=\mV_r^\top\mV_{>r}=0$ and the dimensions add to $d_\text{in}$, removing $\operatorname{span}(\mS^{-\top}\mV_{>r})$ leaves exactly $\mathcal{K}$, so the initialized weight is $\mW\mP_\text{init}$.
In \Cref{eq:whit_split}, the error on $\mC_r$ vanishes because $\mP_\text{init}\mC_r=\mC_r$, and the error on $\mC_{>r}$ is
\begin{equation}
    \mW(\mI-\mP_\text{init})\mC_{>r}
    =\mU_{>r}\boldsymbol{\Sigma}_{>r}
     -\mU_r\boldsymbol{\Sigma}_r\mC_r^+\mC_{>r}.
\end{equation}
The two terms have orthogonal column spaces, giving
\begin{equation}
    \mathcal{E}(\mW\mP_\text{init})
    =\|\boldsymbol{\Sigma}_{>r}\|_F^2
     +\|\boldsymbol{\Sigma}_r\mC_r^+\mC_{>r}\|_F^2,
    \label{eq:whit_excess_proof}
\end{equation}
where the first term is the optimal rank-$r$ error and the second the excess.
Because $\mS$ is invertible, $\mC_r$ has full column rank and $\mC_r^+=(\mC_r^\top\mC_r)^{-1}\mC_r^\top$; if $\sigma_r>0$, $\boldsymbol{\Sigma}_r$ is invertible as well, so the excess vanishes if and only if $\mC_r^\top\mC_{>r}=0$.
$\mC_r^+\mC_{>r}$ holds the least-squares coefficients of the removed input directions on the kept ones, so the excess is their squared norm weighted by the kept singular values.

\myparagraph{(iii) Tied groups.}
For $|\mathcal{G}|$ layers sharing their input, and hence $\mS$, such as Q/K/V, one shared kept subspace constrains the row-stacked approximation of $\bar\mW=[\mW_1;\dots;\mW_{|\mathcal{G}|}]$ to rank $r$, and the summed error is $\sum_m\|(\mW_m-\widehat{\mW}_m)\mS\|_F^2=\|(\bar\mW-\widehat{\bar\mW})\mS\|_F^2$.
By Eckart--Young \citep{eckart1936approximation}, the leading right singular subspace of $\bar\mW\mS$ gives the optimal shared subspace in whitened coordinates, and (ii) applies to $\bar\mW$.
For $|\mathcal{G}|$ layers sharing one output subspace, such as K and V under grouped-query attention, with input factors $\mS_m$, a shared output projector $\mP_\text{out}$ gives the summed error $\sum_m\|(\mI-\mP_\text{out})\mW_m\mS_m\|_F^2=\|(\mI-\mP_\text{out})[\mW_1\mS_1\;\cdots\;\mW_{|\mathcal{G}|}\mS_{|\mathcal{G}|}]\|_F^2$, which the leading left singular subspace of the column-stacked whitened weight minimizes, and (i) applies.
Output tying shares output coordinates rather than a projected activation, so on its own it yields no single latent to cache.
\hfill$\square$

\myparagraph{Alternative recast.}
One could instead remove the whitened truncation's null space $\operatorname{span}(\mC_{>r})$ itself.
With $\mP_{>r}=\mC_{>r}\mC_{>r}^+$, the same split gives
\begin{equation}
    \mathcal{E}\big(\mW(\mI-\mP_{>r})\big)
    =\|\boldsymbol{\Sigma}_{>r}\|_F^2
     +\|\boldsymbol{\Sigma}_{>r}\mC_{>r}^+\mC_r\|_F^2 .
\end{equation}
Neither excess dominates in general: the alternative's is weighted by the smaller singular values $\boldsymbol{\Sigma}_{>r}$, but involves $\mC_{>r}^+$, which can amplify error when the removed directions are ill-conditioned.
We use $\mP_\text{init}$, which reproduces the whitened truncation exactly on the kept subspace.

\myparagraph{Numerical stabilization and output-side bases.}
The implementation accumulates the unnormalized input Gram matrix $\mH=n\mG$.
Cholesky is first attempted without a ridge; if it fails, the implementation replaces $\mH$ by $\mH+\eta\mI$, with $\eta=-\lambda_\text{min}(\mH)+10^{-6}$, and retries.
Scaling the Cholesky factor by $1/\sqrt{n}$ gives a factor of $\mG+(\eta/n)\mI$, so the whitened optimum and the excess identities then concern the regularized objective
\begin{equation}
    \mathcal{E}_\eta(\mZ)=\mathcal{E}(\mZ)+(\eta/n)\|\mW-\mZ\|_F^2,
\end{equation}
with $\eta=0$ when the first factorization succeeds.
For an output-tied group, the implementation instead decomposes the summed bias-free output Gram matrix, proportional to $\sum_m\mW_m\mG_m\mW_m^\top$; its eigenvectors are the left singular vectors of $[\mW_1\mS_1\;\cdots\;\mW_{|\mathcal{G}|}\mS_{|\mathcal{G}|}]$, so no input Cholesky factor is required.
The added ridge, $0.01$ times the mean diagonal, shifts eigenvalues without changing eigenspaces in exact arithmetic.

\subsection{Measured-KL Allocation}
\label{app:alloc_details}
This section states the allocator of \Cref{subsec:alloc_init} in full.
For a unit $u$ with projected dimension $d_u$, dense parameter count $N_u$, and factor count $c_u(r)$ at retained rank $r$ (input- or output-tied costs of \Cref{app:design_choices}, counting each shared factor once), removing the trailing $k$ basis directions saves
\begin{equation}
    s_u(k)=\begin{cases}
        0, & k=0 \text{ (dense)},\\
        N_u-c_u(d_u-k), & k>0,
    \end{cases}
    \label{eq:unit_savings}
\end{equation}
which is negative for small $k$, since a shallow truncation stores more parameters than the dense unit.
Each measurement applies the actual initialized orthogonal projector, including the input-side recast of \Cref{prop:whitened}, and $\Delta_u(0)=0$.
We evaluate $\Delta_u$ on a grid of removal fractions, replace each curve of \Cref{eq:kl_cost} by its running maximum so that it is nondecreasing, and interpolate linearly between grid counts, yielding $\widetilde\Delta_u$.
For a target compression ratio $\tau$, allocation uses the separable surrogate
\begin{equation}
    \min_{\{k_u\}}\;\sum_u\widetilde\Delta_u(k_u)
    \quad\text{subject to}\quad\sum_u s_u(k_u)\geq \tau\sum_u N_u,
    \qquad k_u\in\mathcal{K}_u,
    \label{eq:alloc_objective}
\end{equation}
where $\mathcal{K}_u$ contains zero and the grid removal counts.
Starting with all units dense, we solve it greedily: among all units and all grid counts $k'>k$ with $s_u(k')>s_u(k)$, we take the move with the smallest marginal cost
\begin{equation}
    \frac{\widetilde\Delta_u(k')-\widetilde\Delta_u(k)}{s_u(k')-s_u(k)},
    \label{eq:marginal_cost}
\end{equation}
until the target is reached.
Because a move may span several grid points, a unit's first move can jump past the break-even rank; sensitive units may never move and stay dense.
An integer search finally trims the last move to the closest attainable target.

\newpage
\subsection{\method Algorithm}
\label{app:alg_pseudocode}
\Cref{alg:lsp_oneshot} states the pipeline of \Cref{sec:lsp,subsec:alloc_init,subsec:training_loop} end to end.

\begin{algorithm}[H]
\caption{\method.}
\label{alg:lsp_oneshot}
\begin{algorithmic}[1]
\Require model $M$; calibration set $\mathcal{D}_\text{cal}$; held-out validation set $\mathcal{D}_\text{val}$; target compression $\tau$; epochs $E$; modulation epochs $E_\alpha$; dropout $p$; orthogonality weight $\lambda_\text{ort}$; learning rate $\eta$; patience $P$; merge tolerance $\epsilon_\text{SVD}$
\State Replace each targeted linear layer (all but the head and the embeddings) with its \method counterpart; tie compatible input groups (Q/K/V, gate/up), or K/V on the output side under grouped-query routing
\State \textbf{Gather} on $\mathcal{D}_\text{cal}$ without gradients: collect the input Gram for each input-side unit and each member's bias-free output Gram for an output-side unit
\State \textbf{Build bases}: use the row-stacked whitened weight for input-tied groups, or the summed output Gram for output-tied groups (\Cref{app:whitened_excess})
\State \textbf{Measure}: cache dense outputs on $\mathcal{D}_\text{alloc}\subseteq\mathcal{D}_\text{cal}$; truncate one unit at a time over the removal grid and record $\Delta_u(k)$ (\Cref{eq:kl_cost})
\State \textbf{Allocate}: take running maxima and interpolate; compare all higher grid endpoints by marginal KL per parameter saved (\Cref{eq:marginal_cost}); integer-trim only the last selected interval to the nearest target
\State Measure the KL of the joint allocation and compare it with $\sum_u\widetilde\Delta_u(k_u)$; fix the selected ranks for training
\State \textbf{Initialize}: select each unit's trailing $k_u$ directions; set $\mV\leftarrow\operatorname{qf}(\mS^{-\top}\mV^w_{>r})$ for an input-side unit, or use the trailing output-Gram eigenvectors for an output-side unit \Comment{NoLSP at the selected ranks and ties}
\State Set up Adam over $\{\mV_\ell\}$ with a linear-warmup cosine schedule over $E$ epochs; every pretrained weight and bias stays frozen
\For{epoch $e = 1, \ldots, E$ \textbf{or} until early stop}
    \For{micro-batch in $\mathcal{D}_\text{cal}$}
        \State $\alpha\leftarrow\min\!\big(1,\ t/(E_\alpha N_\mu)\big)$ \Comment{$t$ = cumulative micro-batch index, starting at 1; $N_\mu$ = micro-batches per epoch per replica}
        \For{compressed unit}
            \State $\mU \leftarrow \operatorname{qf}(\mV)$; draw $\vm \sim \mathrm{Bernoulli}(1-p)^{k}$, independently per member of a tied group
            \State Forward with $\mP_{\alpha,\vm} = \mI - \alpha\,\mU\operatorname{diag}(\vm)\mU^\top$, applied in activation space
        \EndFor
        \If{distillation}
            \State $p_\text{dense} \leftarrow$ forward of the same $M$ with the projections disabled, without gradients \Comment{no second copy of the weights}
        \EndIf
        \State $\mathcal{L} \leftarrow \mathcal{L}_\text{obj} + \lambda_\text{ort}\mathcal{L}_\text{ort}$, \quad $\mathcal{L}_\text{obj} \in \{\mathcal{L}_\text{KL}\ (\method),\ \mathcal{L}_\text{task}\ (\method^T)\}$
        \State Backprop; update $\{\mV_\ell\}$ and step the schedule after each gradient-accumulation window
    \EndFor
    \State Evaluate on $\mathcal{D}_\text{val}$ (validation perplexity for language models, accuracy for the ViT); record $\{\mV_\ell\}$ if best so far; \textbf{break} if no improvement for $P$ epochs
\EndFor
\State Restore the best-validation $\{\mV_\ell\}$
\State \textbf{Merge} at $\alpha=1$, $\vm=\mathbf{1}$ (\Cref{app:merge}): factor the row-stacked input-projected or column-stacked output-projected weights by thin SVD, sharing $\mA$ or $\mB$ respectively; drop singular values below $\epsilon_\text{SVD}$
\State \Return permanently compressed model $M'$
\end{algorithmic}
\end{algorithm}

\subsection{Merge Details}
\label{app:merge}
An input-tied group factors as $\mW_m\mP=\mB_m\mA$ with $\mB_m=\mW_m\mU_\perp$ and shared $\mA=\mU_\perp^\top$, $\mU_\perp \in \mathbb{R}^{d_\text{in} \times r}$ an orthonormal basis of $\operatorname{span}(\mU)^\perp$ in the input space and $r = d_\text{in} - k_\mathcal{G}$ (\Cref{eq:merge}). An output-tied group factors as
\begin{equation}
    \mP\mW_m \;=\; \mB\mA_m,
    \qquad
    \mB \;=\; \mU_\perp \in \mathbb{R}^{d_\text{out} \times r},
    \quad
    \mA_m \;=\; \mU_\perp^\top\mW_m \in \mathbb{R}^{r \times d^m_\text{in}},
    \label{eq:merge_output}
\end{equation}
with $\mU_\perp$ an orthonormal basis of $\operatorname{span}(\mU)^\perp$ in the output space and $r = d_\text{out} - k_\mathcal{G}$. In both cases the shared factor is the bare basis and the per-member factor carries the weight. Both factorizations are exact before numerical truncation and leave the bias unchanged.
In practice, factors are read from a thin SVD of the merged weight, row-stacked for an input tie and column-stacked for an output tie. If all $r$ latent directions are nonzero, this differs from the displayed factors only by an invertible change of latent basis. Rank deficiency or the numerical merge tolerance can reduce the exported rank below $r$; dropping nonzero singular values at that tolerance introduces a further approximation.

For input-tied K/V, a latent cache stores $\vz=\mA\vx$ and reconstructs keys and values as $\mB_k\vz$ and $\mB_v\vz$, adding biases and applying rotary embeddings after reconstruction. Output-tied K/V share $\mB$ but generally need separate latents $\mA_k\vx$ and $\mA_v\vx$; when K/V are both input-tied with Q and output-tied with each other, the input latent $\vz$ still suffices and the output tie only narrows $\mB_k$ and $\mB_v$. Either latent-cache implementation requires changes beyond replacing linear layers with two factors. No QR, modulation or direction dropout remains after export; merging uses the evaluation projector $\mP_{\alpha=1,\vm=\mathbf{1}}$ of \Cref{sec:lsp}.

\subsection{Limitations}
\label{app:limitations}
(i) The measured-KL allocation is a local surrogate: isolated unit costs omit interactions, and the current strategy has no global-optimality guarantee. This makes the allocation uncertain on quality in specific settings:
in a few cases uniform ranks can still be preferred; however empirical evidence shows that measured KL is still better, since it is able to adapt to different architectures: performance improvement over Qwen3-4B, OPT-1.3B, and two out of three compression ranges of Llama-2-7B.
(ii) Compression is also not free: \method requires both the KL measurements and projection training, although the measurements are reusable across ratios and objectives. On the other hand, compression is a one-off cost that is repaid by the inference savings of \Cref{subsec:compute} and \method justifies this with the strong performance.
(iii) Tied groups only lead to savings with latent cache if KV are tied on the input side. This means our best configuration for Qwen3-4B does not gain in efficiency as much as the input-tied version does. This represents a performance-efficiency trade-off, as shown in \Cref{tab:qwen_tying}: tying KV on the output side leads to the best performance, but one can choose to tie KV on the input side for improved inference efficiency instead.

\section{Implementation Details}
\label{app:protocol}

\myparagraph{Shared configuration.} The main accuracy tables use the measured-KL allocation of \Cref{eq:kl_cost,eq:marginal_cost} against the compressible-linear parameter count, the whitened initialization of \Cref{subsec:alloc_init} with a Cholesky whitener, shared-input tying (with the Qwen exception below), a linear-warmup cosine schedule, Adam with no weight decay, modulation ramped to $1$ over the first epoch, $\lambda_\text{ort} = 0.05$, and a factorization tolerance of $0.05$ below which singular directions are dropped at merge. Weights are loaded in bf16; compression arithmetic (covariance accumulation, Cholesky, SVD, QR) runs in fp32 on Qwen3-4B and Llama-2-7B and in fp64 on OPT-125M, OPT-1.3B and ViT-B/16.

\myparagraph{KL-measurement settings and tying.} The allocation subset $\mathcal{D}_\text{alloc}$ holds 128K calibration tokens for the LLMs and $4{,}096$ images for the ViT, and $\Delta_u$ compares the dense and the truncated model's next-token or class distributions. The removal grids are $i/8$ ($i=1,\ldots,7$) on OPT and ViT, and $i/16$ ($i=1,\ldots,15$) on Qwen3-4B and Llama-2-7B. Each measurement truncates one unit with every other unit dense, and the allocation is not refined jointly afterwards. Measuring units in isolation is an additive surrogate: on the selected allocations the joint KL of the compressed model exceeds the sum of the isolated costs by $2.1$--$4.3\times$ on Llama-2-7B, $1.7$--$3.3\times$ on Qwen3-4B, $2.2$--$4.6\times$ on the OPT models and $1.5$--$14\times$ on the ViT, generally rising with the ratio. On Qwen3-4B, K and V share an output-side factor, following the grouped-query routing, rather than the forced Q/K/V input tie; the allocation leaves them mostly dense (\Cref{tab:qwen_tying}).

\myparagraph{Replicates.} \Cref{tab:runs_seeds} provides evidence of \method robustness, reporting the standard deviations over three seeds for \method results in \Cref{sec:experiments}.
\begin{table}[t]
\centering
\caption{Seed sensitivity of \method\ and \method$^T$: mean $\pm$ standard deviation over three full-pipeline seeds, each run
read at its own validation-selected epoch. The LLMs are WikiText-2 perplexity,
\mbox{ViT-B/16} is CIFAR-100 accuracy on the single (CIFAR-100 only) pool.}
\vspace{-0.2cm}
\label{tab:runs_seeds}
\resizebox{0.7\linewidth}{!}{
\begin{tabular}{@{}lllccc@{}}
\toprule
\textbf{Model} & \textbf{Metric} & \textbf{Method} & -30\% & -50\% & -70\% \\
\midrule
\multirow{2}{*}{\textbf{OPT-125M}} & \multirow{2}{*}{ppl $\downarrow$}
 & \method$^T$ & 24.48 $\pm$ 0.11 & 30.63 $\pm$ 0.28 & 42.41 $\pm$ 0.06 \\
 & & \method & 30.95 $\pm$ 0.42 & 35.35 $\pm$ 0.67 & 48.88 $\pm$ 0.24 \\
\midrule
\multirow{2}{*}{\textbf{OPT-1.3B}} & \multirow{2}{*}{ppl $\downarrow$}
 & \method$^T$ & 12.93 $\pm$ 0.02 & 16.42 $\pm$ 0.11 & 23.33 $\pm$ 0.26 \\
 & & \method & 15.70 $\pm$ 0.15 & 18.51 $\pm$ 0.11 & 22.39 $\pm$ 0.30 \\
\midrule
\multirow{2}{*}{\textbf{Qwen3-4B}} & \multirow{2}{*}{ppl $\downarrow$}
 & \method$^T$ & 9.20 $\pm$ 0.03 & 12.48 $\pm$ 0.02 & 19.40 $\pm$ 0.14 \\
 & & \method & 9.13 $\pm$ 0.07 & 11.42 $\pm$ 0.10 & 16.18 $\pm$ 0.08 \\
\midrule
\multirow{2}{*}{\textbf{Llama-2-7B}} & \multirow{2}{*}{ppl $\downarrow$}
 & \method$^T$ & 6.39 $\pm$ 0.01 & 8.43 $\pm$ 0.06 & 13.03 $\pm$ 0.10 \\
 & & \method & 6.62 $\pm$ 0.05 & 8.05 $\pm$ 0.08 & 10.94 $\pm$ 0.12 \\
\midrule
\multirow{2}{*}{\textbf{ViT-B/16}} & \multirow{2}{*}{acc $\uparrow$}
 & \method$^T$ & 89.26 $\pm$ 0.13 & 87.80 $\pm$ 0.17 & 84.21 $\pm$ 0.06 \\
 & & \method & 89.38 $\pm$ 0.19 & 88.32 $\pm$ 0.14 & 85.92 $\pm$ 0.48 \\
\bottomrule
\end{tabular}
}
\end{table}

\myparagraph{Per-model settings.} Calibration is $1024$ sequences of $2048$ tokens for the decoder models and $10{,}000$ images for the ViT covariance pass, with a per-epoch draw of $1024$ sequences ($4096$ images) for the training stage. Batch sizes are $32$ on the OPT models, $8$ globally on Qwen3-4B and Llama-2-7B (four-way data parallel at micro-batch $1$), and $512$ on ViT. Per-direction dropout is $p = 0.05$ ($0.1$ on Llama-2-7B). Validation is a held-out split throughout, so that no test data is seen during selection.

\myparagraph{Selection.} The \emph{epoch} is selected on validation data, and the selected epoch's projections are the ones merged and reported. The \emph{learning rate} is swept, keeping the best validation setting per model and ratio, which is the same rule applied to every baseline that exposes a tunable knob. 

\myparagraph{Compression ratio.} A ratio is the fraction of the parameters of the \emph{compressible linear layers} that is removed --- the layers \method substitutes, excluding embeddings, the language-model head and the normalizations --- counted on the merged model with shared factors counted once. This is the denominator SVD-LLM, Swift-SVD and Dobi-SVD also report against, so nominal ratios are directly comparable. The measured-KL allocator greedily fills the budget and integer-trims its last move, and realized savings land within $0.25$ points of nominal in every cell ($-30.0$--$30.2$, $-50.0$--$50.2$ and $-70.0$--$70.2\%$).

\myparagraph{Baselines.} All baselines are using the same calibration budget of $1024$ sequences of $2048$ tokens, run with bf16 weights and compression arithmetic in fp32 or fp64 (following \method), and, for the SVD methods, are evaluated in factorized form. Where a baseline exposes a tuning knob we sweep it around the published value and keep the best per model and ratio: SVD-LLM's recovery learning rate against its single published one, Dobi-SVD's $\gamma$ learning rate around the released default, Swift-SVD's allocation $\alpha$ over its own eleven-point grid, and ASVD's $\alpha$. Dobi-SVD is run without its quantization step so that parameter counts match; Swift-SVD's OPT rows are bias-corrected; SVD-LLM's recovery stage is LoRA of rank $8$ on the calibration corpus of the table it appears in.
Notice that \method keeps original weights frozen, and learns to compress their subspace, which means it does not move from the original subspace but only projects.
SVD-LLM-v2 \citep{wang2025svdllmv2} and MoDeGPT \citep{lin2025modegptmodulardecompositionlarge}, though relevant, could not be reported as no official implementation has been released. The ViT ports of SVD-LLM~(W) and SliceGPT are adapted from the official released implementation, while FLAR-SVD and PELA come from their official implementation. PELA, specifically, re-trains all the network's weights with the objective to copy the dense model's intermediate features, which acts as a recovery fine-tuning stage.

\myparagraph{Evaluation.} Perplexity follows the SparseGPT and SliceGPT convention: the test split is joined, tokenized once and cut into non-overlapping $2048$-token windows. Zero-shot accuracy is measured with the EleutherAI harness \citep{evalharness} at zero shots on the merged model, and we report plain accuracy.

\myparagraph{Benchmarks.} The choice of benchmarks follows other compression methods in the literature. In \Cref{tab:results} we report perplexity for a diverse mix of pruning and low-rank compression methods, as reported by \citet{ashkboos2024slicegpt,van2023llm,wang2025svdllm,qinsi2025dobisvd,qi2026swiftsvd}. For zero-shot performance (\Cref{tab:zeroshot_6bench}) we restrict the analysis to low-rank compression methods \citep{wang2025svdllm,qinsi2025dobisvd,qi2026swiftsvd}, since they are the closest to \method. Specifically, we report the same six benchmark evaluations as baselines do, and calibrate on the same datasets. While Alpaca-calibrated compression follows our compression ratios reported for other tables, for C4-calibrated compression we follow same procedure as Swift-SVD, with compression ratios of $-20\%/40\%/60\%$, to introduce further richness in compression ratios. We re-evaluate all methods and benchmarks for evaluation fairness, using the same setup for calibration and evaluation for all methods.

\section{Initialization, Tying and Allocation}
\label{app:ablations}

We ablate three choices: the spectral initialization, sharing a projector across a tied group, and allocating ranks from output KL. The initialization and tying ablations use a shape-based uniform allocation, so that within each table only the stated factor changes.

\myparagraph{Uniform allocation.} The uniform control assigns every compressed matrix the same retained fraction $\rho$ of its dense parameter count, before accounting for ties:
\begin{equation}
    r_\ell = \Big\lfloor \rho\,\frac{d_\text{in}d_\text{out}}{d_\text{in}+d_\text{out}}\Big\rfloor,
    \qquad k_\ell = d_\ell-r_\ell,
    \label{eq:unif_alloc}
\end{equation}
where $d_\ell$ is the projected dimension: $\min(d_\text{in},d_\text{out})$ for an individual layer, $d_\text{in}$ for an input-tied group and $d_\text{out}$ for an output-tied group. A tied group uses $k_\mathcal{G}=\min_{\ell\in\mathcal{G}}k_\ell$, and $\rho$ is set by binary search on the merged parameter count, shared factors counted once.

\myparagraph{Initialization.} Before training, neither start dominates (\Cref{tab:init_untrained}): the whitened truncation is better on OPT-1.3B and Llama-2-7B, the plain Gram on Qwen3-4B and on OPT-125M at $-30$ and $-50\%$, and all cells are far from dense. The untrained ordering does not carry over to training: on OPT-125M, where the plain start leads by up to $2\times$ untrained, the whitened start ends about $2\%$ lower in perplexity at every ratio (\Cref{tab:init_trained_opt125m}), and on Llama-2-7B it also stays ahead after training (\Cref{tab:init_tying_llama}). We therefore keep the whitened truncation.

\begin{table}[b!]
\captionsetup{font=small}
\centering
\caption{\textbf{Initialization before training.} WikiText-2 test perplexity of the training-free truncation at uniform allocation, from the whitened truncation of \Cref{subsec:alloc_init} (\emph{Whitened}, NoLSP at uniform allocation) or from the trailing directions of the plain activation Gram (\emph{Plain}), removing the same number of parameters per unit.}
\label{tab:init_untrained}
\small
\setlength{\tabcolsep}{5pt}
\begin{tabular}{@{}l c cc cc cc@{}}
\toprule
 & & \multicolumn{2}{c}{$\mathbf{-30\%}$} & \multicolumn{2}{c}{$\mathbf{-50\%}$} & \multicolumn{2}{c}{$\mathbf{-70\%}$} \\
\cmidrule(lr){3-4}\cmidrule(lr){5-6}\cmidrule(lr){7-8}
\textbf{Model} & \textbf{Dense} & \textbf{Whitened} & \textbf{Plain} & \textbf{Whitened} & \textbf{Plain} & \textbf{Whitened} & \textbf{Plain} \\
\midrule
OPT-125M & $27.6$ & $582$ & $\mathbf{282}$ & $664$ & $\mathbf{534}$ & $\mathbf{1{,}538}$ & $2{,}211$ \\
OPT-1.3B & $14.6$ & $\mathbf{26.1}$ & $52.5$ & $\mathbf{74.4}$ & $571$ & $\mathbf{814}$ & $18{,}786$ \\
Qwen3-4B & $7.9$ & $40.1$ & $\mathbf{29.4}$ & $255$ & $\mathbf{125}$ & $7{,}186$ & $\mathbf{4{,}449}$ \\
Llama-2-7B & $5.5$ & $\mathbf{9.2}$ & $13.8$ & $\mathbf{22.1}$ & $36.9$ & $\mathbf{172}$ & $215$ \\
\bottomrule
\end{tabular}

\end{table}

\begin{table}[b!]
\captionsetup{font=small}
\centering
\caption{\textbf{Initialization before and after training, OPT-125M.} WikiText-2 test perplexity (dense $27.6$) of the training-free truncation and of \method distilled from it, only the initialization changed.}
\label{tab:init_trained_opt125m}
\small
\setlength{\tabcolsep}{5pt}
\begin{tabular}{@{}l cc cc cc@{}}
\toprule
 & \multicolumn{2}{c}{$\mathbf{-30\%}$} & \multicolumn{2}{c}{$\mathbf{-50\%}$} & \multicolumn{2}{c}{$\mathbf{-70\%}$} \\
\cmidrule(lr){2-3}\cmidrule(lr){4-5}\cmidrule(lr){6-7}
\textbf{OPT-125M} & \textbf{Whitened} & \textbf{Plain} & \textbf{Whitened} & \textbf{Plain} & \textbf{Whitened} & \textbf{Plain} \\
\midrule
Untrained & $582$ & $\mathbf{282}$ & $664$ & $\mathbf{534}$ & $\mathbf{1{,}538}$ & $2{,}211$ \\
Trained (\method) & $\mathbf{31.7}$ & $32.3$ & $\mathbf{35.9}$ & $36.6$ & $\mathbf{44.8}$ & $45.8$ \\
\bottomrule
\end{tabular}

\end{table}

\myparagraph{Tying.} \Cref{tab:init_tying_llama} crosses initialization and tying on Llama-2-7B, with the tied and separate arms searched to the same parameter target (realized savings within $0.15$ points). The whitened start leads under both tyings, by $1.9$--$3.0$ Avg6 points and $20$--$33\%$ in perplexity. Tying raises Avg6 at every ratio for both starts ($0.3$--$3.1$ points, most at $-30\%$), while perplexity is mixed beyond $-30\%$. A tied group stores its input factor once, so it keeps a higher rank at a fixed budget, and its shared latent is what the cache advantage of \Cref{subsec:compute} rests on.

\begin{table}[t!]
\captionsetup{font=small}
\centering
\caption{\textbf{Initialization and tying after training, Llama-2-7B.} \method distilled on Alpaca, uniform removed rank per layer, the better of two learning rates per arm; WikiText-2 test perplexity and mean zero-shot accuracy (Avg6) of the validation-selected epoch.}
\label{tab:init_tying_llama}
\small
\setlength{\tabcolsep}{5pt}
\begin{tabular}{@{}l l cc cc cc@{}}
\toprule
 & & \multicolumn{2}{c}{$\mathbf{-30\%}$} & \multicolumn{2}{c}{$\mathbf{-50\%}$} & \multicolumn{2}{c}{$\mathbf{-70\%}$} \\
\cmidrule(lr){3-4}\cmidrule(lr){5-6}\cmidrule(lr){7-8}
\textbf{Initialization} & \textbf{Q/K/V, gate/up} & \textbf{PPL} & \textbf{Avg6} & \textbf{PPL} & \textbf{Avg6} & \textbf{PPL} & \textbf{Avg6} \\
\midrule
Whitened & tied & $\mathbf{12.2}$ & $\mathbf{52.5}$ & $21.8$ & $\mathbf{47.3}$ & $\mathbf{37.9}$ & $\mathbf{42.0}$ \\
Whitened & untied & $14.8$ & $50.4$ & $\mathbf{21.3}$ & $47.0$ & $38.2$ & $41.5$ \\
Plain & tied & $16.4$ & $50.6$ & $30.7$ & $45.3$ & $49.8$ & $39.3$ \\
Plain & untied & $22.2$ & $47.5$ & $27.9$ & $44.0$ & $47.6$ & $38.8$ \\
\bottomrule
\end{tabular}

\end{table}

\myparagraph{Allocation.} \Cref{tab:alloc_ablation} compares the measured-KL allocation on OPT-1.3B with three alternatives searched to the same realized savings: the uniform retained parameters of \Cref{eq:unif_alloc}, a uniform removed rank (the same fraction of directions removed from every unit), and a budget read off a diagonal Fisher of the calibration loss. Measured KL is best on both metrics at every ratio, by $0.2$--$0.9$ perplexity and $0.7$--$1.0$ Avg6 points over the best alternative.

\begin{table}[t!]
\captionsetup{font=small}
\centering
\caption{\textbf{Rank allocation, OPT-1.3B.} Four budgets at the same realized savings (within $0.05$ points), whitened tied initialization, distilled on WikiText-2, the better of two learning rates per arm; WikiText-2 test perplexity and Avg6 of the validation-selected epoch.}
\label{tab:alloc_ablation}
\small
\setlength{\tabcolsep}{5pt}
\begin{tabular}{@{}l cc cc cc@{}}
\toprule
 & \multicolumn{2}{c}{$\mathbf{-30\%}$} & \multicolumn{2}{c}{$\mathbf{-50\%}$} & \multicolumn{2}{c}{$\mathbf{-70\%}$} \\
\cmidrule(lr){2-3}\cmidrule(lr){4-5}\cmidrule(lr){6-7}
\textbf{Allocation} & \textbf{PPL} & \textbf{Avg6} & \textbf{PPL} & \textbf{Avg6} & \textbf{PPL} & \textbf{Avg6} \\
\midrule
Uniform retained parameters (\Cref{eq:unif_alloc}) & $16.5$ & $41.1$ & $18.7$ & $37.8$ & $24.2$ & $35.0$ \\
Uniform removed rank & $16.8$ & $40.6$ & $18.9$ & $37.6$ & $23.4$ & $35.4$ \\
Diagonal Fisher & $16.7$ & $41.4$ & $18.6$ & $38.1$ & $25.6$ & $34.5$ \\
Measured KL (\textbf{ours}) & $\mathbf{15.6}$ & $\mathbf{42.4}$ & $\mathbf{17.9}$ & $\mathbf{38.8}$ & $\mathbf{23.2}$ & $\mathbf{36.3}$ \\
\bottomrule
\end{tabular}

\end{table}

\myparagraph{Measured-KL against uniform allocation.} \Cref{tab:alloc_unifkl} compares measured-KL and uniform allocation on zero-shot accuracy, for the models and calibration sets of \Cref{tab:zeroshot_6bench}. Measured allocation leads in mean accuracy in five of six settings, by up to $3.8$ points (Qwen3-4B, $-20\%$), and trails by $0.4$ points on Llama-2-7B at $-50\%$. Per benchmark it leads in $26$ of $36$ cells, including ARC-e and PIQA in every setting, while WinoGrande favours the uniform rule in four of six. On Qwen3-4B the measured runs also change the K/V routing, which \Cref{tab:qwen_tying} separates from the allocation. The gain comes at the cost of the KL measurements (\Cref{app:cost}), although one set of measurements serves every ratio and both objectives; the effect on the deployed shape is given in \Cref{tab:inference_alloc_llama}.

\begin{table}[t!]
\captionsetup{font=small}
\centering
\caption{\textbf{Uniform against measured-KL allocation, zero-shot accuracy} ($\uparrow$, \%). \emph{Uniform} is the retained fraction of \Cref{eq:unif_alloc}; \emph{Measured KL} is the per-unit KL cost of \Cref{eq:kl_cost}. Same whitened initialization, distillation objective (\method) and realized budget; the Qwen3-4B pair also differs in K/V routing (\Cref{tab:qwen_tying}). Llama-2-7B is calibrated on Alpaca and compressed by $-30/50/70\%$, Qwen3-4B on C4 by $-20/40/60\%$, as in \Cref{tab:zeroshot_6bench}.}
\label{tab:alloc_unifkl}
\small
\setlength{\tabcolsep}{5pt}
\begin{tabular}{@{}lcccccccc@{}}
\toprule
\textbf{Allocation} & \textbf{Comp.} & \textbf{ARC-e} & \textbf{PIQA} & \textbf{Openb.} & \textbf{WinoG.} & \textbf{HellaS.} & \textbf{MathQA} & \textbf{Avg.} \\
\midrule
\multicolumn{9}{@{}l}{\textbf{Llama-2-7B} (Alpaca calibration)} \\
\midrule
Dense (Base) & -- & 75.6 & 77.8 & 32.8 & 69.9 & 57.1 & 28.1 & 56.9 \\
\midrule
Uniform & \multirow{2}{*}{-30\%} & 70.3 & 74.5 & \textbf{30.6} & 63.4 & 50.0 & 25.5 & 52.4 \\
Measured KL &  & \textbf{71.2} & \textbf{75.4} & 30.0 & \textbf{64.0} & \textbf{50.4} & \textbf{26.4} & \textbf{52.9} \\
\midrule
Uniform & \multirow{2}{*}{-50\%} & 64.3 & 70.6 & \textbf{25.8} & \textbf{59.8} & \textbf{43.8} & \textbf{24.8} & \textbf{48.2} \\
Measured KL &  & \textbf{65.2} & \textbf{70.8} & 24.6 & 58.6 & 43.4 & 23.9 & 47.8 \\
\midrule
Uniform & \multirow{2}{*}{-70\%} & 53.2 & 64.4 & 19.0 & \textbf{53.4} & 35.3 & \textbf{22.1} & 41.2 \\
Measured KL &  & \textbf{55.6} & \textbf{66.1} & \textbf{21.8} & 51.7 & \textbf{36.1} & 21.9 & \textbf{42.2} \\
\midrule
\multicolumn{9}{@{}l}{\textbf{Qwen3-4B} (C4 calibration)} \\
\midrule
Dense (Base) & -- & 79.0 & 77.9 & 32.0 & 70.3 & 54.6 & 53.8 & 61.3 \\
\midrule
Uniform & \multirow{2}{*}{-20\%} & 74.1 & 74.4 & 26.0 & 64.2 & 49.6 & 32.1 & 53.4 \\
Measured KL &  & \textbf{78.3} & \textbf{76.7} & \textbf{30.8} & \textbf{65.6} & \textbf{51.8} & \textbf{40.0} & \textbf{57.2} \\
\midrule
Uniform & \multirow{2}{*}{-40\%} & 67.2 & 71.4 & 25.2 & \textbf{61.8} & 44.7 & 26.0 & 49.4 \\
Measured KL &  & \textbf{70.6} & \textbf{72.3} & \textbf{28.8} & 60.9 & \textbf{45.4} & \textbf{26.5} & \textbf{50.7} \\
\midrule
Uniform & \multirow{2}{*}{-60\%} & 55.2 & 67.8 & 20.6 & \textbf{55.5} & \textbf{38.3} & 22.6 & 43.3 \\
Measured KL &  & \textbf{56.9} & \textbf{68.8} & \textbf{21.2} & 55.3 & 37.3 & \textbf{23.3} & \textbf{43.8} \\
\bottomrule
\end{tabular}

\end{table}

\myparagraph{K/V routing on Qwen3-4B.} \Cref{tab:qwen_tying} crosses the two allocations with forced Q/K/V input tying and with the grouped-query routing (K and V share an output-side factor, Q is projected alone), at fixed objective, learning rate, epoch budget and realized savings. Under the uniform allocation, routing changes Avg6 by at most $0.6$ points and narrows the cache by about $8\%$. Under the measured allocation, the output-side routing is ahead at every ratio ($0.5$--$1.6$ Avg6 points, $2$--$5\%$ in perplexity), but K and V stay almost uncompressed, so the cache is as wide as the dense model's at $-20$ and $-40\%$. At matched input tying, measured allocation leads by $3.0$ Avg6 points at $-20\%$ and trails by $0.1$ and $0.4$ at $-40$ and $-60\%$. The wide measured cache follows from the KL measurements: per parameter saved, truncating the K/V pair costs about an order of magnitude more KL than any other unit type, so greedy allocation buys it last, whereas the uniform rule compresses every unit.

\begin{table}[H]
\captionsetup{font=small}
\centering
\caption{\textbf{Allocation and K/V routing, Qwen3-4B.} \method distilled on C4, same learning rate, epoch budget and realized savings; gate/up are tied in every arm. \emph{Q/K/V, input}: one shared input projection for Q, K and V. \emph{K/V, output}: the grouped-query routing, with K and V sharing an output-side factor and Q projected alone. The uniform Q/K/V row is that table's Uniform cell. Avg6 and C4 test perplexity of the validation-selected epoch; KV is the floats cached per token per layer (dense $2048$). Best per column in bold.}
\label{tab:qwen_tying}
\small
\setlength{\tabcolsep}{4pt}
\label{tab:qwen4_allocation_performance}
\begin{tabular}{@{}ll ccc ccc ccc@{}}
\toprule
 & & \multicolumn{3}{c}{$\mathbf{-20\%}$} & \multicolumn{3}{c}{$\mathbf{-40\%}$} & \multicolumn{3}{c}{$\mathbf{-60\%}$} \\
\cmidrule(lr){3-5}\cmidrule(lr){6-8}\cmidrule(lr){9-11}
\textbf{Allocation} & \textbf{Tied group} & \textbf{Avg6} & \textbf{PPL} & \textbf{KV} & \textbf{Avg6} & \textbf{PPL} & \textbf{KV} & \textbf{Avg6} & \textbf{PPL} & \textbf{KV} \\
\midrule
Uniform & Q/K/V, input & $53.4$ & $15.9$ & $1357$ & $49.4$ & $18.4$ & $1018$ & $43.3$ & $23.0$ & $679$ \\
Uniform & K/V, output & $54.0$ & $16.0$ & $\mathbf{1242}$ & $49.3$ & $18.6$ & $\mathbf{932}$ & $42.9$ & $23.6$ & $\mathbf{622}$ \\
Measured KL & Q/K/V, input & $56.4$ & $15.0$ & $1822$ & $49.3$ & $17.9$ & $1701$ & $42.9$ & $23.2$ & $1282$ \\
Measured KL & K/V, output & $\mathbf{56.9}$ & $\mathbf{14.7}$ & $2048$ & $\mathbf{50.9}$ & $\mathbf{17.1}$ & $2048$ & $\mathbf{44.2}$ & $\mathbf{22.4}$ & $1920$ \\
\bottomrule
\end{tabular}

\end{table}

\newpage
\section{What \method Compresses, Across Models}
\label{app:subspace}

\Cref{fig:rank_profile_models} repeats the allocation measurement of \Cref{subsec:analysis} on the measured-KL checkpoints of OPT-125M, OPT-1.3B, Qwen3-4B and the CIFAR-100 ViT-B/16, and \Cref{fig:drop_diff_blocks} repeats its spectral measurement on single blocks of OPT-1.3B and Llama-2-7B.

\myparagraph{Where the budget goes.} On OPT-125M, Q/K/V blocks are the least compressed at all compression budgets, especially the last layers, which stay uncompressed at all compression ratios. For both OPT models, the last layers tend to be less compressed than earlier layers. For Qwen3-4B, an interesting behavior appears: because of non-tying Q and K/V together, Q becomes the most compressed layer at all compression ratios, while K/V layers are almost never compressed (only for few early layer, at $-70\%$ compression). In addition, MLP layers are also compressed less, especially the down-projection one. The ViT instead shows a pattern where middle layers get compressed less, at all compression ratios.

\begin{figure}[H]
\centering
\includegraphics[width=0.49\linewidth]{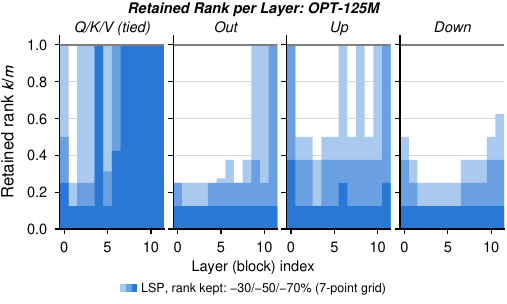}\hfill
\includegraphics[width=0.49\linewidth]{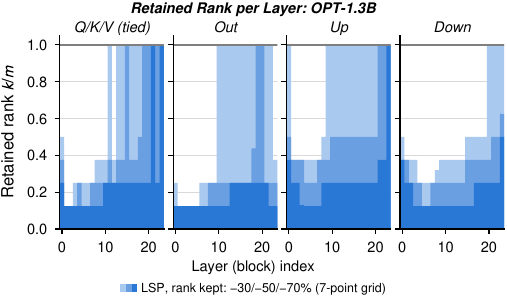}\\[4pt]
\includegraphics[width=0.545\linewidth]{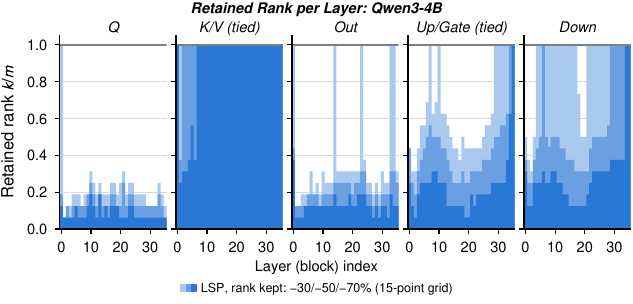}\hfill
\includegraphics[width=0.435\linewidth]{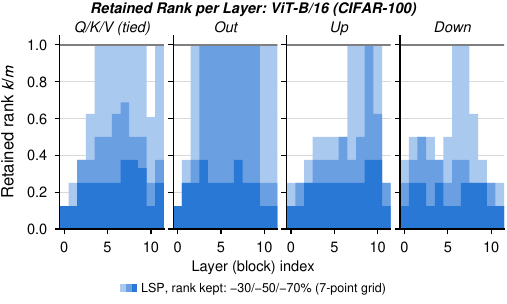}
\vspace{-0.2cm}
\caption{\textbf{Rank allocation across models.} Retained rank $r$ as a fraction of the full rank $d$, by block and projection type, for the measured-KL \method checkpoints of OPT-125M, OPT-1.3B, Qwen3-4B and the CIFAR-100 ViT-B/16; the same panel for Llama-2-7B is \Cref{fig:drop_diff} (left). Shades stack the three ratios, $-30\%$ lightest behind and $-70\%$ darkest in front, so each shade's top edge is that ratio's profile wherever the allocations nest; $r/d = 1$ is a unit the allocator left dense. The removal grid is $7$ points on OPT-125M, OPT-1.3B and the ViT and $15$ on Qwen3-4B. Qwen3-4B has grouped-query attention, so Q is projected alone and K/V share an output-side factor, and each gets its own panel.}
\label{fig:rank_profile_models}
\vspace{-0.2cm}
\end{figure}
\myparagraph{What the removal looks like, block by block.} \Cref{fig:drop_diff_blocks} repeats the two spectral panels of \Cref{fig:drop_diff} for an early, a middle and a late block of OPT-1.3B (blocks $3$, $12$ and $21$ of $24$) and Llama-2-7B (blocks $4$, $16$ and $28$ of $32$).
Removal looks as in Llama's average: it spreads over the whole spectrum rather than the tail a weight SVD would cut, deepens with the ratio, and reaches the first singular direction, most strongly in the up projection on OPT-1.3B (direction~$1$ loses $2.5$--$4.4$ of its amplitude in block~$3$) and in the tied Q/K/V on Llama-2-7B ($0.7$--$2.3$).
Deeper blocks are spared more: OPT-1.3B's block~$21$ keeps Q/K/V dense at every ratio and its other projections almost intact at $-30\%$, and on Llama-2-7B block~$16$ keeps its Q/K/V, output and down projections dense at $-30\%$ and block~$28$ its up and down projections.
The difference to NoLSP is small in every block, so the thin difference of the averaged measurement is not an averaging artifact: at most $2.5\%$ of the removal in the Llama-2-7B blocks shown (median over all compressed units $1.0\%$), and on OPT-1.3B growing with depth, from $1$--$2\%$ in block~$3$ to $5.8\%$ in the down projection of block~$21$ (median $1.6\%$).
Its sign pattern holds across depth: \method removes more of the leading quarter of directions and keeps more of the trailing half than NoLSP in $171$ of the $180$ compressed OPT-1.3B units and $244$ of the $248$ Llama-2-7B units at $-50\%$ and $-70\%$.

\begin{figure}[p]
\centering
{\small\textbf{OPT-1.3B}}\\[4pt]
\includegraphics[width=0.33\linewidth]{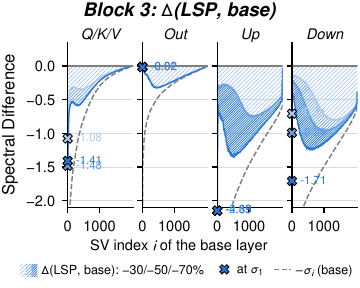}\hfill
\includegraphics[width=0.33\linewidth]{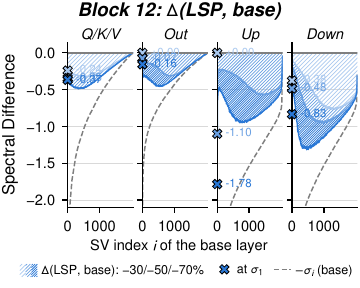}\hfill
\includegraphics[width=0.33\linewidth]{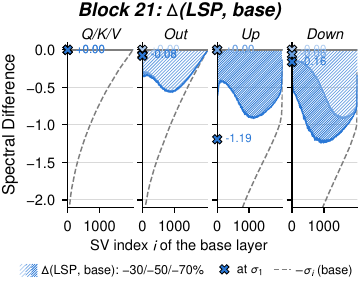}\\[2pt]
\includegraphics[width=0.33\linewidth]{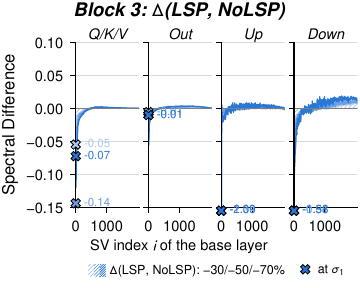}\hfill
\includegraphics[width=0.33\linewidth]{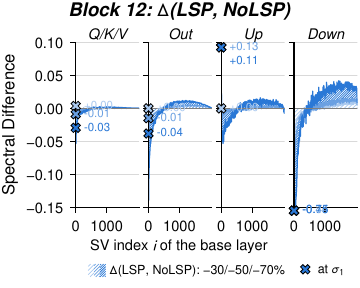}\hfill
\includegraphics[width=0.33\linewidth]{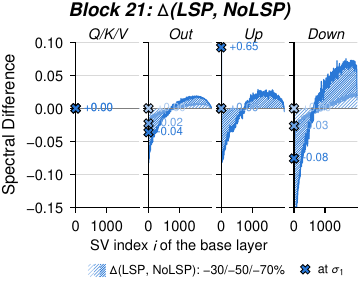}\\[6pt]
{\small\textbf{Llama-2-7B}}\\[4pt]
\includegraphics[width=0.33\linewidth]{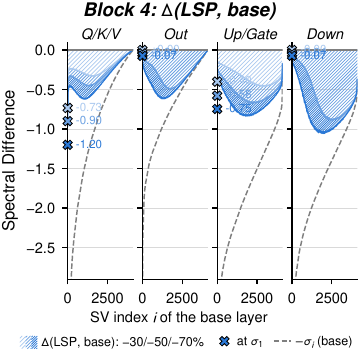}\hfill
\includegraphics[width=0.33\linewidth]{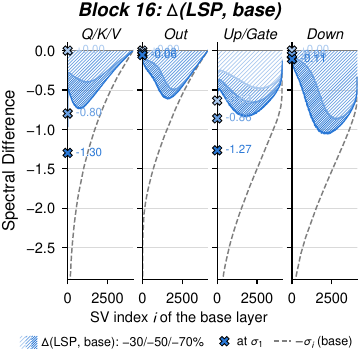}\hfill
\includegraphics[width=0.33\linewidth]{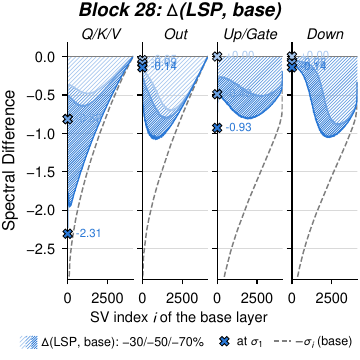}\\[2pt]
\includegraphics[width=0.33\linewidth]{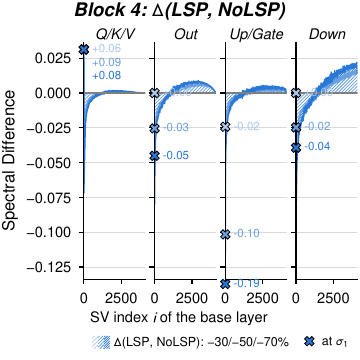}\hfill
\includegraphics[width=0.33\linewidth]{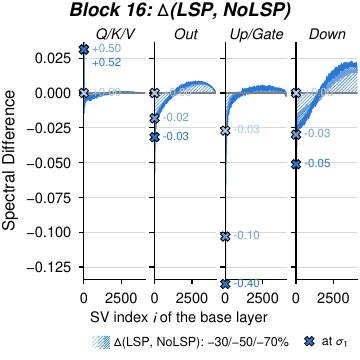}\hfill
\includegraphics[width=0.33\linewidth]{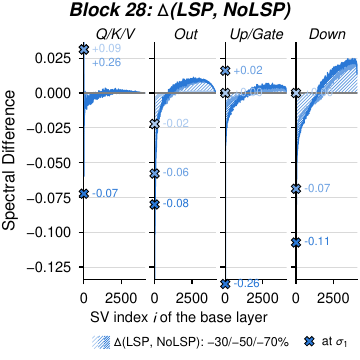}
\vspace{-0.2cm}
\caption{\textbf{Spectral removal in single blocks.} The two spectral panels of \Cref{fig:drop_diff} for an early, a middle and a late block (columns) of OPT-1.3B (blocks $3$, $12$, $21$ of $24$, top) and Llama-2-7B (blocks $4$, $16$, $28$ of $32$, bottom), on the measured-KL checkpoints of \Cref{tab:results}. For each model, the first row is the amplitude lost along each original singular direction, $\Delta_i=\sigma_i(a_i-1)$, with $-\sigma_i$ of the uncompressed layer dashed, and the second the difference to NoLSP, $\sigma_i(a_i^{\text{LSP}}-a_i^{\text{NoLSP}})$, at the same measured-KL ranks. Each row shares one vertical axis. Shades are the $-30/50/70\%$ runs (light to dark), the bands are a running mean over $9$ directions, a missing band is a unit left dense at that ratio, and tied Q/K/V and up/gate panels average their member matrices; crosses give the raw value on direction~1, parked on the axis edge when outside it.}
\label{fig:drop_diff_blocks}
\vspace{-0.2cm}
\end{figure}

\newpage
\section{Inference Efficiency}
\label{app:inference_models}

\Cref{tab:inference_ratios} extends \Cref{tab:inference} to all three ratios on the WikiText-2-calibrated Llama-2-7B checkpoints. Throughout this section, throughput is measured on one GH200 in bf16 with SDPA and CUDA-graph decoding, at batch 1 and with a full-width KV cache, while latent-cache widths and context capacities are analytical estimates computed from the factor widths, assuming a latent cache (\Cref{app:merge}) and counting persistent tensors only; context capacities assume a 95.5\,GiB device with 6\,GiB headroom. At weights and FLOPs matched to within $2\%$, \method decodes fastest at every ratio, $15$--$26\%$ ahead of the best untied factorization at $512$ tokens, and its latent cache is $1.2\times$, $1.9\times$ and $2.2\times$ narrower. Every untied baseline factorization decodes slower than the dense model at $-30\%$, while \method is faster than dense at every ratio ($1.21\times$, $1.36\times$ and $1.56\times$ at $-30\%$, $-50\%$ and $-70\%$). This margin is a short-context one: it falls to $1.11\times$ at 16k tokens, where reading the cache dominates.

\begin{table}[b!]
\centering
\caption{Inference efficiency of compressed Llama-2-7B at every ratio, the checkpoints of \Cref{tab:results}, on one GH200 (bf16, SDPA). \emph{Memory}, analytic from the checkpoints' ranks (factors vs.\ dense): weights, floats cached per token per layer with the latent KV cache ($\vz = \mA\vx$; a unit the measured-KL allocation leaves uncompressed stays dense and caches full-width K/V), weights plus KV cache at 128k tokens and batch 8, and the longest context that fits 95.5\,GiB at batch 8. \emph{Speed}: FLOPs to prefill 2k tokens and per decode step at 2k context, and measured CUDA-graph-compiled decode tok/s at batch 1 for 512, 4k and 16k tokens of context. Best factorized method per ratio in bold.}
\label{tab:inference_ratios}
\small
\resizebox{\linewidth}{!}{
\setlength{\tabcolsep}{1.5pt}
\begin{tabular}{@{}lcccccccccc@{}}
\toprule
& \multicolumn{4}{c}{\textbf{Memory}} & \multicolumn{5}{c}{\textbf{Compute and speed (batch 1)}} \\
\cmidrule(lr){2-5}\cmidrule(lr){6-10}
\textbf{Method} & \textbf{Weights} & \textbf{KV floats} & \textbf{Weights+KV} & \textbf{Max ctx} & \textbf{Prefill} & \textbf{Decode} & \multicolumn{3}{c}{\textbf{Decode tok/s}} \\
& \textbf{(GiB)} & \textbf{/token/layer} & \textbf{@128k, b8 (GiB)} & \textbf{(k tok, b8)} & \textbf{TFLOPs @2k} & \textbf{GFLOP/step} & \textbf{@512} & \textbf{@4k} & \textbf{@16k} \\
\midrule
Llama-2-7B (dense) & 12.55 & 8192 & 524.6 & 19.7 & 29.26 & 14.29 & 142 & 85 & 36 \\
\midrule
\multicolumn{10}{@{}l}{\textit{$-30\%$}} \\
SVD-LLM (W) & 8.93 & 2866 ($2.9\times$) & 188.1 ($2.8\times$) & 59.0 & 21.30 & 10.40 & 135 & 83 & 35 \\
Swift-SVD & 8.93 & 2866 ($2.9\times$) & 188.1 ($2.8\times$) & 59.0 & 21.30 & 10.40 & 137 & 83 & 36 \\
Dobi-SVD & 9.06 & 3279 ($2.5\times$) & 214.0 ($2.5\times$) & 51.4 & 21.59 & 10.54 & 136 & 83 & 35 \\
\textbf{LSP (Ours)} & 8.93 & \textbf{2316 ($3.5\times$)} & \textbf{153.7 ($3.4\times$)} & \textbf{73.0} & 21.30 & 10.40 & \textbf{172} & \textbf{95} & \textbf{37} \\
\midrule
\multicolumn{10}{@{}l}{\textit{$-50\%$}} \\
SVD-LLM (W) & 6.52 & 2048 ($4.0\times$) & 134.5 ($3.9\times$) & 85.0 & 16.00 & 7.81 & 163 & 92 & 37 \\
Swift-SVD & 6.52 & 2048 ($4.0\times$) & 134.5 ($3.9\times$) & 85.0 & 16.00 & 7.81 & 164 & 92 & 37 \\
Dobi-SVD & 6.53 & 2060 ($4.0\times$) & 135.2 ($3.9\times$) & 84.5 & 16.01 & 7.82 & 164 & 93 & 37 \\
\textbf{LSP (Ours)} & 6.52 & \textbf{1096 ($7.5\times$)} & \textbf{75.0 ($7.0\times$)} & \textbf{158.8} & 16.00 & 7.81 & \textbf{193} & \textbf{101} & \textbf{38} \\
\midrule
\multicolumn{10}{@{}l}{\textit{$-70\%$}} \\
SVD-LLM (W) & 4.11 & 1228 ($6.7\times$) & 80.9 ($6.5\times$) & 145.8 & 10.69 & 5.22 & 191 & 100 & 38 \\
Swift-SVD & 4.11 & 1228 ($6.7\times$) & 80.9 ($6.5\times$) & 145.8 & 10.69 & 5.22 & 193 & 101 & 38 \\
Dobi-SVD & 4.12 & 1434 ($5.7\times$) & 93.7 ($5.6\times$) & 124.9 & 10.71 & 5.23 & 189 & 99 & 38 \\
\textbf{LSP (Ours)} & 4.11 & \textbf{556 ($14.7\times$)} & \textbf{38.9 ($13.5\times$)} & \textbf{322.1} & 10.69 & 5.22 & \textbf{221} & \textbf{108} & \textbf{40} \\
\bottomrule
\end{tabular}
}
\end{table}

\myparagraph{Allocation at deployment.} \Cref{tab:inference_alloc_llama} compares the uniform and measured-KL allocations on the same checkpoints; at a given ratio both hold the same weights ($8.93$, $6.52$ and $4.11$\,GiB) and decode FLOPs. The measured budget decodes faster at every ratio ($10$, $17$ and $6\%$), partly because a unit it leaves dense runs as one matrix multiply instead of two. Its cache is $38\%$ wider at $-30\%$, where a few Q/K/V groups stay uncompressed and cache full-width K and V, but $8$ and $22\%$ narrower at $-50$ and $-70\%$, where it compresses attention harder. Both allocations hold longer contexts and decode faster than every untied factorization of \Cref{tab:inference_ratios}, so the advantage over them does not depend on the allocation.

\begin{table}[t!]
\captionsetup{font=small}
\centering
\caption{\textbf{Deployment cost of the allocation, Llama-2-7B.} The uniform and measured-KL allocations compared in \Cref{tab:alloc_unifkl}, here on the WikiText-2-calibrated checkpoints, on one GH200, Q/K/V tied on the input side. \emph{KV}: floats cached per token per layer, and \emph{Max ctx}: the longest context (k tokens) that fits $95.5$\,GiB at batch~8, both analytic from the checkpoints' ranks, counting a unit the allocator left uncompressed as caching full-width K,V. \emph{tok/s}: measured CUDA-graph-compiled decode throughput at batch 1 and $512$ tokens of context, both allocations in the same session, best of two passes. Within a ratio both hold the same weights and issue the same decode FLOPs.}
\label{tab:inference_alloc_llama}
\small
\setlength{\tabcolsep}{4pt}
\begin{tabular}{@{}l ccc ccc ccc@{}}
\toprule
 & \multicolumn{3}{c}{$\mathbf{-30\%}$} & \multicolumn{3}{c}{$\mathbf{-50\%}$} & \multicolumn{3}{c}{$\mathbf{-70\%}$} \\
\cmidrule(lr){2-4}\cmidrule(lr){5-7}\cmidrule(lr){8-10}
\textbf{Allocation} & \textbf{KV} & \textbf{Max ctx} & \textbf{tok/s} & \textbf{KV} & \textbf{Max ctx} & \textbf{tok/s} & \textbf{KV} & \textbf{Max ctx} & \textbf{tok/s} \\
\midrule
Llama-2-7B (dense) & $8192$ & $19.7$ & $142$ & $8192$ & $19.7$ & $142$ & $8192$ & $19.7$ & $141$ \\
\midrule
Uniform & $\mathbf{1673}$ & $\mathbf{101.0}$ & $157$ & $1195$ & $145.6$ & $165$ & $717$ & $249.8$ & $209$ \\
Measured KL & $2316$ & $73.0$ & $\mathbf{172}$ & $\mathbf{1096}$ & $\mathbf{158.8}$ & $\mathbf{193}$ & $\mathbf{556}$ & $\mathbf{322.1}$ & $\mathbf{221}$ \\
\bottomrule
\end{tabular}

\end{table}

\myparagraph{K/V routing at deployment.} \Cref{tab:qwen_tying_deploy} gives the deployment cost of the four Qwen3-4B arms of \Cref{tab:qwen_tying}. Within a ratio they hold the same weights ($6.14$, $4.78$ and $3.43$\,GiB) and issue the same decode FLOPs ($7.80$, $6.35$ and $4.89$\,GFLOP per step). The allocation sets the speed: the measured budget is faster than the uniform one in all six pairs ($22$, $10$ and $8\%$ at $-20\%$, $-40\%$ and $-60\%$) because it leaves units dense, whereas the uniform rule factorizes every layer and decodes slower than the dense model at $-20$ and $-40\%$. Input tying is $2$--$4\%$ faster than the output-side routing in all six pairs, since one shared input factor serves Q, K and V in a single matrix multiply. The routing mainly decides the cache, in a direction set by the allocation: it narrows the uniform cache by about $8\%$ and widens the measured one to or near the dense width. No configuration wins on all three axes: uniform with output-side K/V holds the longest context but is the slowest, measured with input tying is the fastest, and measured with output-side K/V is the most accurate but caches nearly as much as the dense model.

Measured-KL allocation with K/V tied on the output side is the most accurate configuration at every ratio (\cref{tab:qwen_tying}), so we use it as the default. When deployment efficiency matters more, \method supports the other trade-offs: uniform allocation with output-side K/V holds the longest context, and measured allocation with input-side tying decodes fastest.

\begin{table}[H]
\captionsetup{font=small}
\centering
\caption{\textbf{Deployment cost of allocation and K/V routing, Qwen3-4B.} The four arms of \Cref{tab:qwen_tying} on one GH200. \emph{KV}: floats cached per token per layer, and \emph{Max ctx}: the longest context that fits $95.5$\,GiB at batch~8, both analytic from the merged checkpoints' ranks, counting a unit the allocator left uncompressed as caching full-width K,V. \emph{tok/s}: measured CUDA-graph-compiled decode throughput at batch 1 and $512$ tokens of context, best of two passes. Within a ratio all four arms hold the same weights and issue the same decode FLOPs, given in the text.}
\label{tab:qwen_tying_deploy}
\small
\setlength{\tabcolsep}{4pt}
\vspace{-0.2cm}
\setlength{\tabcolsep}{2pt}
\resizebox{\linewidth}{!}{
\begin{tabular}{@{}ll cccc cccc cccc@{}}
\toprule
 & & \multicolumn{4}{c}{$\mathbf{-20\%}$} & \multicolumn{4}{c}{$\mathbf{-40\%}$} & \multicolumn{4}{c}{$\mathbf{-60\%}$} \\
\cmidrule(lr){3-6}\cmidrule(lr){7-10}\cmidrule(lr){11-14}
\textbf{Allocation} & \textbf{Tied group} & \textbf{KV} & \textbf{Max ctx} & \textbf{tok/s} & \textbf{Avg6} & \textbf{KV} & \textbf{Max ctx} & \textbf{tok/s} & \textbf{Avg6} & \textbf{KV} & \textbf{Max ctx} & \textbf{tok/s} & \textbf{Avg6} \\
\midrule
\multicolumn{2}{@{}l}{Qwen3-4B (dense)} & $2048$ & $74.6$ & $175$ & $61.3$ & $2048$ & $74.6$ & $175$ & $61.3$ & $2048$ & $74.6$ & $175$ & $61.3$ \\
\midrule
Uniform & Q/K/V, input & $1357$ & $114.5$ & $147$ & $53.4$ & $1018$ & $155.1$ & $170$ & $49.4$ & $679$ & $236.3$ & $186$ & $43.3$ \\
Uniform & K/V, output & $\mathbf{1242}$ & $\mathbf{125.1}$ & $144$ & $54.0$ & $\mathbf{932}$ & $\mathbf{169.4}$ & $165$ & $49.3$ & $\mathbf{622}$ & $\mathbf{257.9}$ & $179$ & $42.9$ \\
Measured KL & Q/K/V, input & $1822$ & $85.3$ & $\mathbf{180}$ & $56.4$ & $1701$ & $92.8$ & $\mathbf{187}$ & $49.3$ & $1282$ & $125.2$ & $\mathbf{201}$ & $42.9$ \\
Measured KL & K/V, output & $2048$ & $75.9$ & $176$ & $\mathbf{56.9}$ & $2048$ & $77.1$ & $182$ & $\mathbf{50.9}$ & $1920$ & $83.6$ & $193$ & $\mathbf{44.2}$ \\
\bottomrule
\end{tabular}
}

\end{table}

\clearpage
\newpage
\section{Compression Cost}
\label{app:cost}
\begin{figure}[H]
\centering
\includegraphics[width=\linewidth]{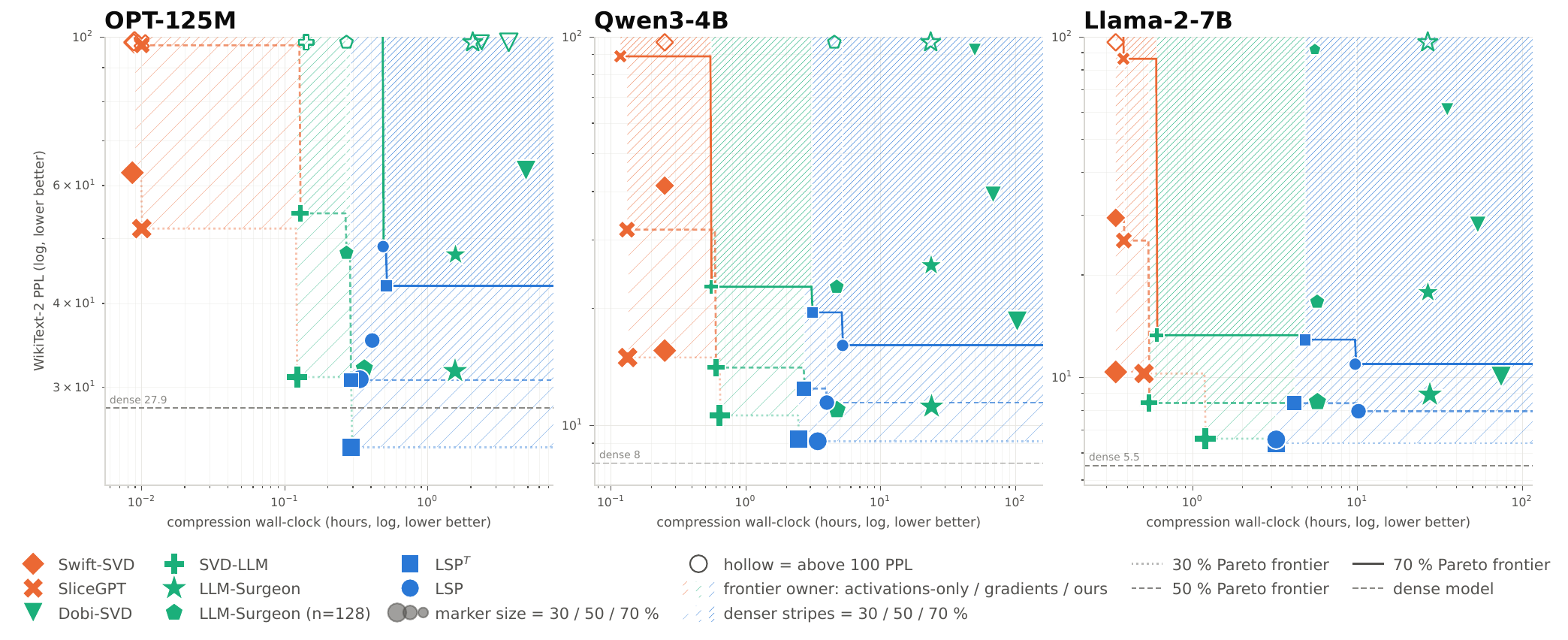}
\caption{\textbf{Compression cost with measured-KL allocation on three LLMs.} WikiText-2 perplexity at $-30/50/70\%$ compression (marker size) against wall-clock hours. \method costs include initialization, training through the selected epoch, and the full KL measurement on the 7-point grid on OPT-125M and the 15-point grid on Qwen3-4B and Llama-2-7B.}
\label{fig:cost}
\end{figure}

\Cref{fig:cost} compares \method's compression cost with that of activation-based methods (SliceGPT, Swift-SVD) and loss-aware ones (LLM-Surgeon, Dobi-SVD, SVD-LLM), on OPT-125M, Qwen3-4B and Llama-2-7B.
Giving the loss-aware baselines more budget does not close the gap. LLM-Surgeon calibrated on \method's $1024$ sequences costs $4$--$6\times$ more than at its default $128$, and its perplexity does not improve. Dobi-SVD, trained for $20$ epochs on the same $1024$ samples, costs $2$--$103$ wall-clock hours and stays far behind: $61.2$ against $10.9$ perplexity on Llama-2-7B at $-70\%$.

Compression cost is paid once per model and ratio, and it is small next to the cost of serving the compressed model. We argue it should therefore not drive the choice between methods, whereas accuracy and inference efficiency (\Cref{subsec:compute}) are paid on every query.
Even so, \method is Pareto-optimal at higher budgets: at every model--ratio setting, one of its two objectives gives the lowest perplexity, within $0.5$--$10$ wall-clock hours. Only methods that are cheaper and less accurate share the frontier: SVD-LLM with LoRA recovery, SliceGPT and Swift-SVD.
\method$^T$ converges faster than \method, reaching its validation optimum at up to $2.5\times$ lower cost ($4.1$ against $10.1$ hours on Llama-2-7B at $-50\%$), though at a higher perplexity at $-50$ and $-70\%$.
The KL measurement is a substantial fixed cost: $0.17$, $1.74$ and $1.11$ wall-clock hours on OPT-125M, Qwen3-4B and Llama-2-7B. The \method points use individual-run perplexities, which can differ from the three-seed means of \Cref{tab:results}.

\section{Transfer on the Growing Calibration Pool}
\label{app:vit_transfer}

\Cref{tab:vit_transfer} breaks down the ViT transfer results of \Cref{tab:vit_results} by calibration pool and target.
The pools grow from CIFAR-100 alone ($10$k images) to the six-dataset pool of \Cref{tab:vit_results} ($47$k), with the same images for every method. \method's mean transfer rises with every added dataset at every ratio, and on the full pool it is best on all three targets at every ratio; the baselines gain less and not always monotonically (FLAR-SVD at $-50\%$ loses $4.4$ points when CIFAR-10 and EuroSAT are added).
The last row of each method is CIFAR-100 alone at the size of the full pool. For \method, this extra CIFAR-100 data changes mean transfer by $+1.2$, $-0.6$ and $+0.8$ points at $-30/50/70\%$, whereas the diverse pool exceeds this size-matched control by $3.7$, $7.1$ and $6.6$ points (the Gain of \Cref{tab:vit_results}): the gain comes from pool composition, not size.

\begin{table}[H]
\captionsetup{font=small}
\centering
\centering
\caption{Complete downstream-transfer results for compressed ViT-B/16, for every calibration pool and target. \emph{Compressed on}: the datasets in the pool (\ding{51}) and its size, the same pools for every method; \ding{51}$^\dagger$ is CIFAR-100 alone at the size of the six-dataset pool. \emph{Transfer to}: linear-probe accuracy on Pets, Aircraft and Places365, absent from every pool, and their mean; \emph{Source}: CIFAR-100 accuracy through the original head. \method$^T$ needs CIFAR-100 labels. Best per column, pool and ratio in \textbf{bold}.}
\vspace{-0.3cm}
\label{tab:vit_transfer}
\scriptsize
\setlength{\tabcolsep}{2.5pt}
\renewcommand{\arraystretch}{0.86}
\begin{tabular}{@{}l@{\hspace{3pt}}c@{\hspace{5pt}}cccccc@{\hspace{0.4em}}r@{\hspace{0.8em}}cccc@{\hspace{0.8em}}c@{}}
\toprule
 & & \multicolumn{7}{c}{\textbf{Compressed on}} & \multicolumn{4}{c}{\textbf{Transfer to}} & \textbf{Source} \\
\cmidrule(lr){3-9}\cmidrule(lr){10-13}\cmidrule(lr){14-14}
\textbf{Method} & & \rotatebox{90}{CIFAR-100} & \rotatebox{90}{Food-101} & \rotatebox{90}{CIFAR-10} & \rotatebox{90}{EuroSAT} & \rotatebox{90}{STL-10} & \rotatebox{90}{DTD} & Images & Pets & Aircraft & Places365 & Mean & CIFAR-100 \\
\midrule
Dense (Base) & & \multicolumn{7}{c}{---} & 71.9 & 36.9 & 33.5 & 47.4 & 89.9 \\
\midrule
\multicolumn{14}{@{}l}{\textit{-30\% parameters}} \\
\multirow{5}{*}{FLAR-SVD} & \multirow{5}{*}{\rule{0.5pt}{32pt}} & \ding{51} &  &  &  &  &  & 10k & \textbf{67.1} & 32.3 & \textbf{31.2} & \textbf{43.6} & 88.7 \\
 & & \ding{51} & \ding{51} &  &  &  &  & 20k & 68.6 & 34.1 & \textbf{32.2} & 45.0 & 88.8 \\
 & & \ding{51} & \ding{51} & \ding{51} & \ding{51} &  &  & 40k & 68.6 & 33.2 & \textbf{32.7} & 44.9 & 88.7 \\
 & & \ding{51} & \ding{51} & \ding{51} & \ding{51} & \ding{51} & \ding{51} & 47k & 68.4 & 33.3 & 32.4 & 44.7 & 88.6 \\
 & & \ding{51}\smash{$^\dagger$} &  &  &  &  &  & 47k & 66.9 & 32.7 & \textbf{31.1} & 43.6 & 88.8 \\
\addlinespace[1pt]
\multirow{5}{*}{SVD-LLM (W)} & \multirow{5}{*}{\rule{0.5pt}{32pt}} & \ding{51} &  &  &  &  &  & 10k & 63.2 & 31.1 & 29.5 & 41.2 & 88.2 \\
 & & \ding{51} & \ding{51} &  &  &  &  & 20k & 63.3 & 32.3 & 28.8 & 41.5 & 87.9 \\
 & & \ding{51} & \ding{51} & \ding{51} & \ding{51} &  &  & 40k & 64.4 & 32.2 & 29.2 & 41.9 & 87.8 \\
 & & \ding{51} & \ding{51} & \ding{51} & \ding{51} & \ding{51} & \ding{51} & 47k & 64.7 & 32.6 & 29.5 & 42.3 & 87.7 \\
 & & \ding{51}\smash{$^\dagger$} &  &  &  &  &  & 47k & 63.0 & 31.2 & 29.2 & 41.2 & 88.2 \\
\addlinespace[1pt]
\multirow{5}{*}{PELA} & \multirow{5}{*}{\rule{0.5pt}{32pt}} & \ding{51} &  &  &  &  &  & 10k & 66.2 & \textbf{32.9} & 30.1 & 43.1 & 89.2 \\
 & & \ding{51} & \ding{51} &  &  &  &  & 20k & 65.0 & 32.2 & 30.3 & 42.5 & 89.1 \\
 & & \ding{51} & \ding{51} & \ding{51} & \ding{51} &  &  & 40k & 67.3 & 34.1 & 30.1 & 43.8 & 88.9 \\
 & & \ding{51} & \ding{51} & \ding{51} & \ding{51} & \ding{51} & \ding{51} & 47k & 67.1 & 34.2 & 30.5 & 43.9 & 88.9 \\
 & & \ding{51}\smash{$^\dagger$} &  &  &  &  &  & 47k & 66.0 & 31.1 & 29.8 & 42.3 & 89.2 \\
\addlinespace[1pt]
\textbf{LSP$^T$ (Ours)} & \rule{0.5pt}{5.5pt} & \ding{51}\smash{$^\dagger$} &  &  &  &  &  & 47k & \textbf{67.8} & \textbf{33.9} & 30.8 & \textbf{44.2} & 89.3 \\
\addlinespace[1pt]
\multirow{5}{*}{\textbf{LSP (Ours)}} & \multirow{5}{*}{\rule{0.5pt}{32pt}} & \ding{51} &  &  &  &  &  & 10k & 66.0 & 32.4 & 29.8 & 42.7 & \textbf{89.3} \\
 & & \ding{51} & \ding{51} &  &  &  &  & 20k & \textbf{70.6} & \textbf{36.8} & 32.1 & \textbf{46.5} & \textbf{89.3} \\
 & & \ding{51} & \ding{51} & \ding{51} & \ding{51} &  &  & 40k & \textbf{71.3} & \textbf{37.2} & 32.0 & \textbf{46.8} & \textbf{89.2} \\
 & & \ding{51} & \ding{51} & \ding{51} & \ding{51} & \ding{51} & \ding{51} & 47k & \textbf{71.7} & \textbf{37.7} & \textbf{33.2} & \textbf{47.5} & \textbf{89.2} \\
 & & \ding{51}\smash{$^\dagger$} &  &  &  &  &  & 47k & 67.3 & 33.8 & 30.5 & 43.9 & \textbf{89.4} \\
\midrule
\multicolumn{14}{@{}l}{\textit{-50\% parameters}} \\
\multirow{5}{*}{FLAR-SVD} & \multirow{5}{*}{\rule{0.5pt}{32pt}} & \ding{51} &  &  &  &  &  & 10k & 58.1 & 27.2 & \textbf{28.7} & 38.0 & 86.2 \\
 & & \ding{51} & \ding{51} &  &  &  &  & 20k & 61.1 & 30.8 & \textbf{29.7} & 40.5 & 86.1 \\
 & & \ding{51} & \ding{51} & \ding{51} & \ding{51} &  &  & 40k & 54.3 & 27.0 & 27.1 & 36.1 & 83.5 \\
 & & \ding{51} & \ding{51} & \ding{51} & \ding{51} & \ding{51} & \ding{51} & 47k & 54.9 & 26.9 & 26.7 & 36.2 & 83.1 \\
 & & \ding{51}\smash{$^\dagger$} &  &  &  &  &  & 47k & 58.6 & 27.5 & \textbf{28.3} & 38.1 & 86.4 \\
\addlinespace[1pt]
\multirow{5}{*}{SVD-LLM (W)} & \multirow{5}{*}{\rule{0.5pt}{32pt}} & \ding{51} &  &  &  &  &  & 10k & 58.6 & 28.3 & 26.6 & 37.8 & 86.1 \\
 & & \ding{51} & \ding{51} &  &  &  &  & 20k & 59.6 & 29.1 & 27.0 & 38.6 & 84.9 \\
 & & \ding{51} & \ding{51} & \ding{51} & \ding{51} &  &  & 40k & 59.6 & 28.9 & 27.2 & 38.6 & 84.4 \\
 & & \ding{51} & \ding{51} & \ding{51} & \ding{51} & \ding{51} & \ding{51} & 47k & 60.1 & 28.4 & 27.1 & 38.6 & 84.1 \\
 & & \ding{51}\smash{$^\dagger$} &  &  &  &  &  & 47k & 58.4 & 28.7 & 27.2 & 38.1 & 86.2 \\
\addlinespace[1pt]
\multirow{5}{*}{PELA} & \multirow{5}{*}{\rule{0.5pt}{32pt}} & \ding{51} &  &  &  &  &  & 10k & \textbf{62.6} & \textbf{29.7} & 28.5 & \textbf{40.3} & 87.9 \\
 & & \ding{51} & \ding{51} &  &  &  &  & 20k & 61.6 & 30.9 & 28.1 & 40.2 & \textbf{87.7} \\
 & & \ding{51} & \ding{51} & \ding{51} & \ding{51} &  &  & 40k & 64.6 & 31.3 & 29.3 & 41.7 & 87.0 \\
 & & \ding{51} & \ding{51} & \ding{51} & \ding{51} & \ding{51} & \ding{51} & 47k & 63.6 & 32.3 & 29.3 & 41.8 & \textbf{87.8} \\
 & & \ding{51}\smash{$^\dagger$} &  &  &  &  &  & 47k & \textbf{62.0} & \textbf{30.0} & 28.2 & \textbf{40.1} & \textbf{88.5} \\
\addlinespace[1pt]
\textbf{LSP$^T$ (Ours)} & \rule{0.5pt}{5.5pt} & \ding{51}\smash{$^\dagger$} &  &  &  &  &  & 47k & 56.0 & 23.7 & 25.7 & 35.1 & 87.8 \\
\addlinespace[1pt]
\multirow{5}{*}{\textbf{LSP (Ours)}} & \multirow{5}{*}{\rule{0.5pt}{32pt}} & \ding{51} &  &  &  &  &  & 10k & 58.9 & 28.1 & 27.5 & 38.2 & \textbf{88.2} \\
 & & \ding{51} & \ding{51} &  &  &  &  & 20k & \textbf{63.2} & \textbf{31.7} & 28.5 & \textbf{41.1} & 87.5 \\
 & & \ding{51} & \ding{51} & \ding{51} & \ding{51} &  &  & 40k & \textbf{67.5} & \textbf{34.4} & \textbf{30.4} & \textbf{44.1} & \textbf{88.0} \\
 & & \ding{51} & \ding{51} & \ding{51} & \ding{51} & \ding{51} & \ding{51} & 47k & \textbf{68.2} & \textbf{35.2} & \textbf{30.7} & \textbf{44.7} & 87.7 \\
 & & \ding{51}\smash{$^\dagger$} &  &  &  &  &  & 47k & 58.7 & 26.9 & 27.3 & 37.6 & 88.3 \\
\midrule
\multicolumn{14}{@{}l}{\textit{-70\% parameters}} \\
\multirow{5}{*}{FLAR-SVD} & \multirow{5}{*}{\rule{0.5pt}{32pt}} & \ding{51} &  &  &  &  &  & 10k & 37.7 & 18.5 & 22.7 & 26.3 & 70.7 \\
 & & \ding{51} & \ding{51} &  &  &  &  & 20k & 40.5 & 19.3 & 23.7 & 27.8 & 68.0 \\
 & & \ding{51} & \ding{51} & \ding{51} & \ding{51} &  &  & 40k & 41.2 & 20.0 & 23.3 & 28.1 & 66.9 \\
 & & \ding{51} & \ding{51} & \ding{51} & \ding{51} & \ding{51} & \ding{51} & 47k & 40.2 & 19.8 & 23.6 & 27.9 & 64.6 \\
 & & \ding{51}\smash{$^\dagger$} &  &  &  &  &  & 47k & 38.5 & 18.9 & 22.5 & 26.6 & 71.5 \\
\addlinespace[1pt]
\multirow{5}{*}{SVD-LLM (W)} & \multirow{5}{*}{\rule{0.5pt}{32pt}} & \ding{51} &  &  &  &  &  & 10k & 47.9 & \textbf{23.2} & 23.9 & 31.7 & 77.2 \\
 & & \ding{51} & \ding{51} &  &  &  &  & 20k & 50.9 & 24.6 & 24.0 & 33.2 & 70.5 \\
 & & \ding{51} & \ding{51} & \ding{51} & \ding{51} &  &  & 40k & 50.5 & 23.7 & 24.1 & 32.8 & 68.2 \\
 & & \ding{51} & \ding{51} & \ding{51} & \ding{51} & \ding{51} & \ding{51} & 47k & 51.9 & 24.0 & 24.4 & 33.4 & 66.2 \\
 & & \ding{51}\smash{$^\dagger$} &  &  &  &  &  & 47k & 48.2 & \textbf{23.0} & 23.9 & 31.7 & 77.1 \\
\addlinespace[1pt]
\multirow{5}{*}{PELA} & \multirow{5}{*}{\rule{0.5pt}{32pt}} & \ding{51} &  &  &  &  &  & 10k & \textbf{53.1} & 22.4 & \textbf{27.1} & \textbf{34.2} & 84.4 \\
 & & \ding{51} & \ding{51} &  &  &  &  & 20k & 54.3 & 24.3 & \textbf{27.5} & 35.4 & \textbf{84.0} \\
 & & \ding{51} & \ding{51} & \ding{51} & \ding{51} &  &  & 40k & 56.2 & 26.3 & 28.1 & 36.9 & 83.0 \\
 & & \ding{51} & \ding{51} & \ding{51} & \ding{51} & \ding{51} & \ding{51} & 47k & 58.9 & 27.8 & 28.2 & 38.3 & 82.8 \\
 & & \ding{51}\smash{$^\dagger$} &  &  &  &  &  & 47k & \textbf{53.0} & 22.3 & \textbf{26.9} & \textbf{34.1} & 85.3 \\
\addlinespace[1pt]
\textbf{LSP$^T$ (Ours)} & \rule{0.5pt}{5.5pt} & \ding{51}\smash{$^\dagger$} &  &  &  &  &  & 47k & 46.4 & 20.3 & 22.7 & 29.8 & 84.2 \\
\addlinespace[1pt]
\multirow{5}{*}{\textbf{LSP (Ours)}} & \multirow{5}{*}{\rule{0.5pt}{32pt}} & \ding{51} &  &  &  &  &  & 10k & 49.7 & 22.2 & 24.8 & 32.2 & \textbf{85.8} \\
 & & \ding{51} & \ding{51} &  &  &  &  & 20k & \textbf{57.1} & \textbf{27.6} & 27.2 & \textbf{37.3} & 83.6 \\
 & & \ding{51} & \ding{51} & \ding{51} & \ding{51} &  &  & 40k & \textbf{58.6} & \textbf{26.9} & \textbf{29.6} & \textbf{38.3} & \textbf{83.9} \\
 & & \ding{51} & \ding{51} & \ding{51} & \ding{51} & \ding{51} & \ding{51} & 47k & \textbf{60.4} & \textbf{29.1} & \textbf{29.4} & \textbf{39.6} & \textbf{84.6} \\
 & & \ding{51}\smash{$^\dagger$} &  &  &  &  &  & 47k & 49.4 & 22.9 & 26.6 & 33.0 & \textbf{85.9} \\
\bottomrule
\end{tabular}

\end{table}

\end{document}